\documentclass[times, review, 10pt]{elsarticle}
\usepackage{amsfonts}
\usepackage{amsmath}
\usepackage{amssymb}
\usepackage{graphicx}
\usepackage{algorithm}
\usepackage{algorithmic}
\usepackage{cases}
\usepackage{geometry, tabularx}
\usepackage{multirow}
\usepackage{enumerate}
\usepackage{bm}
\usepackage{pifont}
\usepackage{footnote}
\usepackage{float}
\usepackage{titlesec}
\usepackage[titletoc]{appendix}
\usepackage{setspace}
\usepackage{makecell}
\usepackage{mathrsfs}

\setcounter{MaxMatrixCols}{30}
\providecommand{\U}[1]{\protect\rule{.1in}{.1in}}

\newtheorem{thm}{\bf Theorem}      
\newtheorem{lem}{\bf Lemma}

\newtheorem{rem}{\bf Remark}

\usepackage{hyperref}
\hypersetup{hypertex=true,
            colorlinks=true,
            linkcolor=blue,
            anchorcolor=green,
            citecolor=red}

\usepackage{graphicx,color}
\usepackage[bf,SL,BF]{subfigure}

\newenvironment{proof}[1][Proof]{\noindent\textbf{#1.} }{\ \rule{0.5em}{0.5em}}
\geometry{headsep=15pt}
\normalsize\setlength{\parskip}{1em}
\setlength{\oddsidemargin}{20mm} \setlength{\evensidemargin}{20mm}
\setlength{\voffset}{-1in} \setlength{\hoffset}{-1in}
\setlength{\textwidth}{175mm} \setlength{\topmargin}{0mm}
\setlength{\headheight}{15mm} \setlength{\headsep}{10mm}
\setlength{\topskip}{0mm} \setlength{\textheight}{225mm}
\titlespacing*{\section} {0pt}{9pt}{0pt}
\numberwithin{equation}{section}
\allowdisplaybreaks

\usepackage{booktabs}
\doublespacing
\begin{document}

\begin{frontmatter}

\title{Leveraging Noisy Manual Labels as Useful Information: An Information Fusion Approach for Enhanced Variable Selection in Penalized Logistic Regression}

\author{Xiaofei Wu}
\affiliation{organization={School of Mathematics and Statistics, Yunnan University},
            addressline={No.55 Daxuecheng South Rd., Shapingba District}, 
            city={Chongqing},
            postcode={401331}, 
            state={Chongqing},
            country={China}}

\author{Rongmei Liang}
\affiliation{organization={Department of Statistics and Data Science, Southern University of Science and Technology},
            addressline={No. 1088 Xueyuan Avenue, Nanshan District}, 
            city={Shenzhen},
            postcode={518055}, 
            state={Guangdong},
            country={China}}


\begin{abstract}
In large-scale supervised learning, penalized logistic regression (PLR) effectively mitigates overfitting through regularization, yet its performance critically depends on robust variable selection. This paper demonstrates that label noise introduced during manual annotation, often dismissed as a mere artifact, can serve as a valuable source of information to enhance variable selection in PLR. We theoretically show that such noise, intrinsically linked to classification difficulty, helps refine the estimation of non-zero coefficients compared to using only ground truth labels, effectively turning a common imperfection into a useful information resource. To efficiently leverage this form of information fusion in large-scale settings where data cannot be stored on a single machine, we propose a novel partition insensitive parallel algorithm based on the alternating direction method of multipliers (ADMM). Our method ensures that the solution remains invariant to how data is distributed across workers, a key property for reproducible and stable distributed learning, while guaranteeing global convergence at a sublinear rate. Extensive experiments on multiple large-scale datasets show that the proposed approach consistently outperforms conventional variable selection techniques in both estimation accuracy and classification performance, affirming the value of intentionally fusing noisy manual labels into the learning process. 
\end{abstract}

\begin{keyword}
Distributed computing \sep Information fusion \sep Logistic regression \sep Massive data \sep Noisy manual labels    \sep Variable selection   
\end{keyword} 
\end{frontmatter}
\section{Introduction}
Label noise is prevalent in classification problems within statistics \cite{Yao2021AsymmetricEC} and machine learning \cite{Bai2024ACS}. A common approach focuses on developing methods to maintain the efficiency of classifiers in the presence of label noise. 
Frenay and Verleysen \cite{Frnay2014ClassificationIT} provided overviews of some works on this problem, and categorized the methods into four types: label noise-robust methods, data cleaning methods, probabilistic label noise-tolerant  methods and model-based label noise-tolerant  methods.  However, all these methods treat noise as either harmful or a lack of statistical information.

Recently, Li \cite{Li2024PositiveN} pointed out that not all noise is harmful; certain types of noise can exhibit useful characteristics in specific tasks. Almost simultaneously, similar findings on classification problems emerged in statistical learning. For example, \cite{Cannings2018ClassificationWI} covered k-nearest neighbour, support vector machine and linear discriminant analysis, \cite{Ahfock2021HarmlessLN} dealt with logistic regression, and \cite{Lee2021BinaryCW} focused on the logistic regression and support vector machine. 
Inspired by these works, this paper delves into the potential of leveraging noisy information fusion in the context of large-scale (big) data. Specifically, it investigates whether there exists a type of useful noise within this information-fusion scenario that can improve the variable selection performance of PLR in \cite{Zou2006TheAL}.   Variable selection refers to the process of choosing a subset of the most important variables from a large set of candidate variables when building a statistical model, to improve predictive performance and interpretability. By excluding irrelevant or redundant variables, variable selection helps reduce the complexity of a logistic regression model, prevent overfitting, and enhance stability and interpretability.

The PLR loss function in \cite{Van2020StackedPL} is given by:
\begin{equation}
\begin{aligned}\label{pqr}
    L(\bm{\beta}) = -\sum_{i=1}^{n} \left[ y_i \log({p}(\bm x_i,\bm \beta)) + (1 - y_i) \log(1 - {p}(\bm x_i,\bm \beta)) \right] + \sum_{j=1}^{d} P_{\lambda}(|\beta_j|),
\end{aligned}    
\end{equation}
where
\begin{align}\label{p1}
{p}(\bm x_i, \bm \beta) = \frac{1}{1 + \exp({-\bm{\beta}^\top \bm{x}_i)}}
\end{align}
is the predicted probability for the \(i\)-th observation, \(y_i \in \{0, 1 \}\) is the observed label, \(\bm{x}_i\) is the feature vector, \(\bm{\beta}\) is the vector of regression coefficients, and \(P_{\lambda}(|\beta_j|)\) is the regularization term with parameter \(\lambda\). A popular regularization term with variable selection ability is LASSO (least absolute shrinkage and selection operator), also known as  $\ell_1$ regularization  ($\lambda \sum_{j=1}^{d} |\beta_j|$). However, many studies  (see \cite{Zou2006TheAL} and \cite{Hastie2015StatisticalLW}) have shown that under the assumption of sparse coefficients (where most coefficients are zero and only a few are non-zero), the LASSO  regularization term does not possess variable selection consistency. In other words, the estimator of LASSO-PLR in \eqref{pqr} cannot guarantee that the true non-zero coefficients will be estimated as non-zero.
To ensure the consistency of variable selection in PLR, an effective alternative is the adaptive (weighted) LASSO (ALASSO \cite{Zou2006TheAL}) or some nonconvex regularization terms in \cite{Hastie2015StatisticalLW}. Due to space limitations in this paper, ALASSO serves as the primary regularization term. 

In supervised learning, manual labeling of training samples is a common practice \cite{Frnay2014ClassificationIT}, especially when dealing with large datasets where some labels are missing or are not easily obtained. When the labeling task is inherently challenging, the provided labels may not always align with the true labels, thus introducing label noise into the training dataset (that is, label noise  comes from classification difficulty). Moreover, when multiple experts manually label the same instance, it is possible for them to assign different category labels, a scenario of label noise that is also considered in this study. To facilitate the use of statistical analysis tools in examining the impact of this noise, we adopt the strategy from \cite{Ahfock2021HarmlessLN}, where manual labels are randomly sampled based on the posterior probabilities of class membership. In this setting, the noise generated by the label is only related to the difficulty of classification. A more relaxed version involves using a random effects model (a.k.a.  finite-mixture model in \cite{Fraley2002ModelBasedCD} and \cite{McLachlan2000FiniteMM}) that generates manual labels based on imperfect approximations of the posterior probabilities of class membership.

The main contributions of this paper are summarized  as follows:
\begin{enumerate}
\item  The first contribution of this paper is to demonstrate that in large-scale PLR, the noise in manual labels (which is solely related to the difficulty of classification) can offer statistical information that is precisely identical to that of the truth label. In other word, fusing this noisy information from multiple sources is beneficial for variable selection in PLR.

Specifically,  when the manual label generation method is based on the posterior probabilities of two categories, the variable selection performance of PLR in the noisy scenario has the following three properties.
\begin{itemize}
    \item  The \textbf{variable selection} performance of PLR is the same as that  in the noise-free (truth labels) one. That is, PLR can accurately identify non-zero true variables regardless of whether there is noise or not.

    \item  In both scenarios (with and without noise), the non-zero variables identified by PLR exhibit a similar \textbf{asymptotic distribution}. Moreover, when the number of experts manually labeled is 1, these two asymptotic distributions will be the same.

    \item When estimating non-zero variables in  PLR with multiple-expert manual labels, the \textbf{relative efficiency} is greater than 1 compared to the  noise-free scenario. Put simply, it achieves a more accurate estimation of non-zero variables than the noise-free case.
Notably, this relative efficiency increases as the number of experts grows.  
\end{itemize}
When the manual label generation method is derived from mixed effects models (i.e. unclear understanding of posterior probabilities), PLR can also have the above three properties.

\item The second contribution of this paper is to develop a partition-insensitive parallel ADMM algorithm for solving large-scale PLR with manually-labeled data stored in a distributed manner across multiple machines. The most prominent feature of this distributed algorithm is its insensitivity to data partitioning. In distributed computing, it means that no matter how the data is stored, regardless of the number of machines the data is stored on and the amount of data each machine holds, the iterative solutions obtained by the algorithm stay consistent. This feature is of great significance for improving result predictability, system scalability, and computational efficiency.   Moreover, this solution is exactly the same as the one obtained when a single machine processes the entire dataset. In theory, the distributed algorithm can be proven to have a sublinear convergence rate.
\end{enumerate}

The paper is structured as follows. Section \ref{sec2} offers preliminaries and a literature review. Section \ref{sec3} presents the variable selection properties and asymptotic relative efficiency of PLR with beneficial noise in manual labels. Section \ref{sec4} proposes a new parallel algorithm for PLR under manual labels. Section \ref{sec5} conducts numerical experiments on real datasets. Section \ref{sec6} concludes the paper and suggests future research directions. Supplementary experiments and technical proofs are in the  supplementary material.  The R package for implementing the algorithm proposed in this paper and some numerical experiments are available at the following \url{https://github.com/xfwu1016/PLRBN}.
\section{Preliminaries and Literature Reviews}\label{sec2}
\subsection{Penalized Logistic Regression}\label{sec21}
For logistic regression models, the density function can be expressed as 
\begin{align}\label{plden}
  f(y \mid \bm x, \bm \beta ) = \left({p}(\bm x, \bm \beta)\right)^{y} \left(1-{p}(\bm x, \bm \beta)\right)^{1 - y},   
\end{align}
where ${p}(\cdot)$ is defined in \eqref{p1}. The Fisher information matrix is defined  as
\begin{align}\label{fis}
I(\boldsymbol{\beta}) = \mathbb  E_{\bm x} \left[ \bm{x} \bm{x}^\top {p}(\bm x, \bm \beta) (1 - {p}(\bm x, \bm \beta) )\right].
\end{align}
The dataset $\{\bm x_i, y_i \}_{i = 1}^n$ consists of independently-collected observation samples regarding $\{\bm x, y \}$. Taking the negative log empirical likelihood function of the collected dataset yields the logistic loss function (cross-entropy loss function), which is the first term in \eqref{pqr}.

\subsubsection{Sparsity and Variable Selection}\label{sec211}
Let $\boldsymbol{\beta}^*$ be the true coefficient vector of PLR. The set of non-zero coefficients is defined as $\mathcal{A} = \{j:\beta_j^* \ne 0 \}$, and further assume that $|\mathcal{A}| = s < d$. Under this assumption, the coefficients are referred to as sparse, meaning that some elements in  $\boldsymbol{\beta}^*$ are zero and the others are non-zero. Similarly, the design matrix and the estimator can be written as $\bm X = (\bm X_{\mathcal{A}}, \bm X_{\mathcal{A}^c}) = (\bm x_1^\top,\bm x_2^\top, \dots, \bm x_n^\top)^\top$ and $\hat{\boldsymbol{\beta}} =  (\hat{\boldsymbol{\beta}}_{\mathcal{A}}^\top, \hat{\boldsymbol{\beta}}_{\mathcal{A}^c}^\top)^\top$, respectively.
Then, let $I_\mathcal{A} = I_\mathcal{A}(\boldsymbol{\beta}^*_{\mathcal{A}}, \mathbf{0})$ denote the Fisher information under the condition $\bm{\beta}^*_{\mathcal{A}^c} = 0$, and it is obvious that $I_\mathcal{A}$ is the submatrix of $I(\boldsymbol{\beta}^*)$ corresponding to the rows and columns of the set $\mathcal{A}$.   

The method of selecting non-zero coefficients is called variable selection. In logistic regression, a commonly used variable selection method is to add a regularization term as shown in \eqref{pqr}.  The estimator  of  ALASSO-PLR is given as follows:
\begin{align}\label{apqr} 
\hat{\bm \beta } \in   \arg \min_{\bm \beta} \left \{ -\sum_{i=1}^{n} \left[ y_i \bm \beta^\top \bm x_i  -  \log(1  + e^{\bm \beta^\top \bm x_i }) \right] + \sum_{j=1}^{d} {\lambda_j}|\beta_j| \right \}.
\end{align}     
The first term is derived from substituting \eqref{p1} into the first term of \eqref{pqr}. The second term is the ALASSO regularization term first  proposed by \cite{Zou2006TheAL}. Specifically, the single regularization parameter $\lambda$ in LASSO is replaced by parameter vector $ \bm \lambda = (\lambda_1,\dots,\lambda_d)^\top$. Here, $\lambda_j$ varies with the absolute value of $\beta_j$. Generally speaking, the larger $|\beta_j|$ is, the smaller $\lambda_j$ will be.  
Zou \cite{Zou2006TheAL} put forward a weighted strategy, which is expressed as
\begin{align}\label{w1}
 \bm\lambda = \frac{\lambda }{|\bm \beta^{'}|^{\gamma}}.    
\end{align}
Here, $\lambda$ represents the regularization parameter, $\bm \beta^{'}$ is a $\sqrt{n}$-consistent estimator of $\bm \beta^*$, and $\gamma > 0$ is a positive constant. The $\sqrt{n}$-consistent estimator of $\bm \beta^*$ is defined as
\begin{align}\label{cons1}
   \|{\boldsymbol{\beta}}^{'} - \boldsymbol{\beta}^*\|_2 = O_P(n^{-1/2}). 
\end{align}

A consistent estimator generally produces  precise coefficient estimates, especially in the big-data scenario (i.e., when $n$ is extremely large). Subsequently, the weighted strategy presented in \eqref{w1} can assign small regularization parameters to coefficients with large absolute values and large regularization parameters to those with small absolute values.  A natural choice for $\bm \beta^{'}$ is the ordinary logistic regression estimator. Unless otherwise specified below, $\bm \beta^{'}$ will denote the ordinary logistic regression estimator. 

Next, we will review the statistical theory for ALASSO-PLR without noise. For the PLR with other regularization terms, please refer to  the comprehensive sparse statistics learning book \cite{Hastie2015StatisticalLW}, and therein references.

\subsubsection{Oracle Property}\label{sec212}
As emphasized in Theorem 4  of \cite{Zou2006TheAL}, the PLR estimator is called have  oracle property, if the estimator (asymptotically) satisfies  the following two properties 
\begin{enumerate}
    \item \textbf{Consistency in variable selection}: It accurately identifies the true subset model, i.e., \(\{j : \hat{\boldsymbol{\beta}}_{j} \neq 0\} = \mathcal{A}\).

    \item \textbf{Asymptotic normality}: It achieves the optimal estimation rate, expressed as \(\sqrt{n}(\hat{\boldsymbol{\beta}}_{\mathcal{A}} - \boldsymbol{\beta}_{\mathcal{A}}^{*}) \xrightarrow{d} \text{N}(0, I_\mathcal{A}^{-1})\), where \(\xrightarrow{d}\) denotes convergence in distribution.
\end{enumerate}
The importance of the oracle property has been emphasized in the literature \cite{Fan2001VariableSV}  and \cite{Fan2004NonconcavePL}, with the argument that an effective procedure should ideally possess these characteristics.

These two properties are not possessed by ordinary logistic regression and LASSO-PLR in general situations. Specifically, the ordinary logistic regression estimator  $\bm \beta^{'}$  has $\sqrt{n}$-consistency in \eqref{cons1} and the following asymptotic distribution
\begin{align}
    \sqrt{n}({\boldsymbol{\beta}}^{'}- \boldsymbol{\beta}^{*}) \xrightarrow{d} \text{N}(0, I(\boldsymbol{\beta}^{*})^{-1}).
\end{align}
However, its coefficients do not have sparsity, that is, the estimated coefficients are not exactly zero. In other words, although the estimated values of the coefficients that are truly zero are small, they are not absolutely zero.  Therefore, $\bm \beta^{'}$ without sparsity can not have the above two properties. 
On the other hand, the LASSO-PLR estimator can achieve sparsity. However, since the LASSO regularization applies the same regularization parameter to all estimation coefficients, it will inevitably result in an inability to accurately select non-zero coefficients or cause a significant bias. 
In detail, when $\lambda$ is not large enough, the LASSO-PLR estimator may fail to shrink certain irrelevant variables to zero, which means it does not satisfy the first property. As the first property is a prerequisite for the second one, the LASSO-PLR estimator (with a not large enough $\lambda$) does not satisfy the second property either.
When $\lambda$ is large, the LASSO-PLR estimator will shrink all irrelevant coefficients to zero. Nevertheless, it will also shrink some coefficients with relatively small absolute values to zero while simultaneously over-compressing other coefficients, leading to significant and non-negligible estimation errors. As a result, it does not possess the two properties.  The idea of applying adaptive regularization terms  in \eqref{w1} to coefficients in ALASSO-PLR is to address this drawback of LASSO-PLR.

\subsection{Manual Labeling}\label{sec22}
The issue of noise in manual labels in supervised learning is an important research area.  Early methods treated the truth labels as latent variables and estimated them based on the observed noisy labels \cite{Smyth1994InferringGT}. In recent years, research has shifted its focus to constructing generative models for noisy labels to train classifiers, see \cite{Bouveyron2009RobustSC} and \cite{Yan2010ModelingAE}. In crowdsourced label analysis, methods like those in \cite{Raykar2012EliminatingSA} and \cite{Zhang2013LearningBA} have been developed to filter malicious annotators and rank them by domain expertise.  It is commonly assumed that annotators have comparable skills and no malicious behavior, which is reasonable in medical research but may not hold in low-threshold crowdsourcing applications.

Research often assumes that label noise is uniformly distributed within classes (see \cite{Jin2002LearningWM} and \cite{Raykar2009SupervisedLF}), but some studies have considered feature-dependent noise \cite{Yan2010ModelingAE}. Under fixed class-conditional noise, logistic regression may be inconsistent \cite{Bi2010TheEO}, while Song et al. \cite{Song2019ConvexAN} showed that under class-conditional uniform noise, noisy labels can be regarded as responses in a modified generalized linear model. Moreover, the phenomenon that label noise is more concentrated around classification decision boundaries has also been explored in robust estimation \cite{Blanchard2016ClassificationWA} , but there is limited research on relevant statistical modeling and quantification of information loss. In practice, information loss, human errors, and data entry mistakes may lead to label noise. To simplify theoretical analysis, the authors in \cite{Ahfock2021HarmlessLN} commonly assumed that label errors stem solely from classification difficulty, and a finite-mixture model was then employed to relax this assumption.   In this paper, we also follow this approach. 

In \cite{Ahfock2021HarmlessLN}, the authors indicated that when the source of label noise is only related to classification difficulty, this noise does not cause loss of statistical information and is therefore  beneficial. Here, we apply their theoretical results to the PLR scenario.  When discussing the variable selection properties in Section \ref{sec21}, it is necessary to assume that $\{ (\bm{x}_i, {y}_i) \}_{i = 1}^{n}$ are independent and identically distributed. Therefore, we only need to discuss a single sample $(\bm{x}_i, {y}_i)$.

\subsubsection{Useful Label Noise}
For a given $\bm{\beta}^{*}$, the posterior distributions for the two categories of logistic regression are as follows: 
\begin{equation}
\begin{aligned}\label{pop}
& \Pr(y_i = 1 \mid \bm x_i) = \frac{1}{1 + \exp({- \bm{x}_i ^\top \bm{\beta}^* })} \\ 
& \Pr(y_i = 0 \mid \bm x_i) = \frac{\exp({- \bm{x}_i ^\top \bm{\beta}^* })}{1 + \exp({- \bm{x}_i ^\top \bm{\beta}^* })}.
\end{aligned}  
\end{equation}
Let $ {\bm \Pr} = (\Pr(y_i = 1 \mid \bm x_i), \Pr(y_i = 0 \mid \bm x_i)) = (\Pr_1,\Pr_2)$.

To capture the subjective nature of manual labeling, assume that the annotated label $\tilde{y}_i$ is a Binomial random variable distributed according to the posterior probabilities of class membership \eqref{pop}, i.e., 
\begin{align}\label{p11}
\tilde{y}_i \mid  \bm{x}_i \sim \text{Binomial}( 1, {\Pr}_1 ).
\end{align}
Within this labeling model, the probability of a \textbf{labeling error} depends on the posterior class probabilities in \eqref{pop}. To be specific, 

\begin{equation}
\begin{aligned}
\label{noiseerror}
& \Pr( \tilde{y}_i \neq {y}_i \mid \bm{x}_i) = 1 - \Pr(\tilde{y}_i = {y}_i \mid \bm{x}_i) \\
= & 1 - \Pr(\tilde{y}_i = 0, {y}_i =0 \mid \bm{x}_i) - \Pr(\tilde{y}_i = 1, {y}_i =1 \mid \bm{x}_i) \\ 
= \ & 1 - \Pr(y_i = 0 \mid \bm{x}_i)^2 - \Pr(y_i = 1 \mid \bm{x}_i)^2 \\
= \ & 1 - {\Pr}_1^2 - {\Pr}_2^2 . 
\end{aligned}    
\end{equation}

This means that when the sample size $n$ is particularly large, the number of false labels in this manual labeling method will also be significant.
Since  $\Pr_1 + \Pr_2=1$, $\Pr_1 \ge 0$ and  $\Pr_2 \ge 0$, then the value of $\Pr( \tilde{y}_i \neq {y}_i \mid \bm{x}_i)$ reaches its maximum when $\Pr_1 = \Pr_2 = 0.5$.   This conclusion is consistent with the conclusion obtained using the information entropy   of logistic regression or PLR.   The information entropy is defined as
\begin{align}
   H({\Pr}_{1} ) =  - {\Pr}_{1} \cdot \log_2 {\Pr}_{1} - (1 - {\Pr}_{1})\cdot \log_2(1 - {\Pr}_{1})
\end{align}
The entropy function \(H({\Pr}_{1})\) is a concave function of \({\Pr}_{1}\). It reaches its maximum value of 1  when \({\Pr}_{1} = 0.5\). This indicates that the entropy is at its highest when the model is most uncertain, that is, when the probabilities of the two classes are equal. 

The manual labeling model \eqref{p11} is of particular interest because the joint distribution of the manual label $\tilde{y}_i$ and the feature vector $\bm{x}_i$ is identical to that of the truth label $y_i$ and the feature vector $\bm{x}_i$. The joint distributions $(\bm{x}_i, \tilde{y}_i) \sim \tilde{f}(\bm{x}_i, \tilde{y}_i; \bm \beta^{*})$ and $(\bm{x}_i, {y}_i) \sim {f}(\bm{x}_i, {y}_i; \bm \beta^{*})$ are equal, as shown below:
\begin{equation}
\begin{aligned}
 \tilde{f}(\bm{x}_i, \tilde{y}_i; \bm \beta^{*}) = & f(\bm{x}_i; \bm \beta^{*}) \tilde{f}(\tilde{y}_i |\bm{x}_i; \bm \beta^{*}) =  f(\bm{x}_i; \bm \beta^{*}) {f}(\tilde{y}_i |\bm{x}_i; \bm \beta^{*}) = {f}(\bm{x}_i, \tilde{y}_i; \bm \beta^{*}), 
\end{aligned}   
\end{equation}
where the second equation holds due to the fact $\tilde{f}(\tilde{y}_i |\bm{x}_i; \bm \beta^{*}) = {f}(\tilde{y}_i |\bm{x}_i; \bm \beta^{*})$. This fact is true since the manual labels $\tilde{y}_i$ are sampled in accordance with the posterior probabilities of class membership in \eqref{p11}.  Consequently, the joint distribution of $(\bm{x}_i, \tilde{y}_i)$ is identical to that of $(\bm{x}_i, {y}_i)$.

Based on the above discussion,  given $n$ independently and identically distributed observations from the manual labeling model, $\{ (\bm{x}_i, \tilde{y}_i) \}_{i = 1}^{n}$, the distribution of the maximum likelihood estimate will be the same as when using a dataset with the truth labels $\{ (\bm{x}_i, {y}_i) \}_{i = 1}^{n}$. 
It then follows that the maximum likelihood loss of a classifier trained using $\{\tilde{y}_i\}_{i=1}^n$ will be the same as that of a classifier trained using the truth labels $\{{y}_i\}_{i=1}^n$. This serves as the theoretical guarantee that label noise can be useful when the label noise process is determined by the posterior probabilities of class membership in \eqref{pop}. As a result, when multiple experts provide manual labels, the resulting manual labels can provide more statistical information compared to the truth labels. In the following, we will explain the method for simulating the generation of manual labels from multiple experts.  It is worth noting that this generation method is particularly simple and has a low computational burden.

\subsubsection{Multi-Expert Manual Labeling}\label{sec31}
Suppose there are $m$ independent individuals in the labeling group, and each provides a label for each of the $n$ feature vectors in the dataset. In an ideal scenario,  given the $n$ independent and identically distributed   observation sample $\{\bm x_i,y_i \}_{i=1}^n$,
the  manually assigned labels are considered as $mn$ independent observations from the model:
\begin{align}\label{DB}
\tilde{\bm Y}_{ik} \mid \bm{x}_i \sim \text{Multinomial}(1, \bm{\Pr} ), i = 1,2,\dots,n, \  k = 1, \ldots, m. 
\end{align}
Then, $\tilde{\bm Y}_{i} = (\tilde{Y}_{i1}, \tilde{Y}_{i2}, \dots,  \tilde{Y}_{im})$ represents a $2 \times m$ matrix, which corresponds to the manual labeling of the $i$-th label carried out by $m$ independent experts. By summing the rows of matrix $\tilde{\bm Y}_{i}$, we can obtain  $\mathbb{S}_i = ( S_i^{(1)},  S_i^{(0)})$, where $ S_i^{(0)}$ is the number of experts who cast  $\tilde{y}_i = 0$,  and $ S_i^{(1)}$ is the number of experts who cast  $\tilde{y}_i = 1$.
As a result,  $\mathbb{S}_i$  has the distribution  
 \begin{align}\label{d1}
\mathbb{S}_i \sim \text{Multinomial}(m, \bm{\Pr} ). 
\end{align}
When $m=1$, the $S_i^{(1)}$ generated by \eqref{d1} is the same as the $\tilde{y}_i$ generated in \eqref{p11}.
                                                                
Nevertheless, the aforementioned scenario merely depicts an idealized state, and its realization in practical applications is highly improbable. \textbf{To  loosen the assumption} regarding the accuracy of manual labeling, we employ a random  effects model for the counts $\mathbb{S}_i$. The Dirichlet-Multinomial model \cite{Johnson1997DiscreteMD} is an extension of the Multinomial distribution. It can be utilized to take into account the fact that the group of experts ($m>1$) does not possess precise knowledge of the posterior probabilities of $\bm \Pr$. We can introduce the overdispersion parameter $\alpha_0 \in \mathbb{R}^+$ to formulate the proposed Dirichlet - Multinomial model for the aggregated noisy manual labels as follows:
\begin{align}\label{d2}
\mathbb{S}_i \sim \text{Dirichlet - Multinomial}(m, \alpha_0\bm \Pr),
\end{align}
where $\mathbb{E}(S_i^{(1)}) = m {\Pr}_1$, $\mathbb{E}(S_i^{(0)}) = m {\Pr}_2$, $\text{Var}(S_i^{(1)}) = m {\Pr}_1(1 - {\Pr}_1)\frac{m+\alpha_0}{1+\alpha_0}$, and $\text{Var}(S_i^{(0)}) = m {\Pr}_2(1 - {\Pr}_2)\frac{m+\alpha_0}{1+\alpha_0}$. 
When $m = 1$, $S_i^{(1)}$ can only take values of either 0 or 1. In this case, $\mathbb{E}(S_i^{(1)}) = {\Pr}_1$ and $\text{Var}(S_i^{(1)}) = {\Pr}_1(1 - {\Pr}_1)$. That is, for any value of $\alpha_0$, $S_i^{(1)} \sim \text{Binomial}(1, {\Pr}_1)$, which is consistent with the $\tilde{y}_i$ generated in \eqref{p11}. 

The Dirichlet-Multinomial model \eqref{d2} can also  be hierarchically represented as follows:
\[
\begin{cases}
\bm p_i = ( p_{i}^{(1)}, p_{i}^{(0)}) \sim \text{Dirichlet}(\alpha_0 \bm \Pr), \\
\mathbb{S}_i  \mid \bm p_i \sim \text{Multinomial}(m, \bm p_i),
\end{cases}
\]
where $\mathbb{E}(\bm{p}_i)= \bm \Pr = ({\Pr}_1,{\Pr}_2)$ and \(\text{Var}(p_{i}^{(1)})=\text{Var}(p_{i}^{(0)})=\frac{{\Pr}_1{\Pr}_2}{\alpha_0 + 1}\).  
Note that when the Dirichlet distribution has two-dimensional parameters, it degenerates into a Beta distribution, which is the conjugate prior of the binomial distribution. 
Consequently, when ${\Pr}_1 = {\Pr}_2$, the Dirichlet distribution is symmetric. Moreover, when $\alpha_0 {\Pr}_1 = \alpha_0 {\Pr}_2 = 1$, the Dirichlet distribution degenerates into a uniform distribution on the interval $[0, 1]$. 

The overdispersion parameter $\alpha_0$ regulates the degree to which the samples of $\bm p_i $ cluster around the true posterior class-probabilities $\bm \Pr$. Two extreme situations will be explained below.
\begin{itemize}
    \item  As $\alpha$ tends to 0 ($\bm p_i$ is far from $\bm \Pr$), the probability of the Dirichlet (Beta)  distribution is concentrated at the two cases $\bm p_i = \{0, 1\}$ or $\bm p_i = \{1, 0\}$. 
This phenomenon is also manifested in the variance. Specifically, $$\lim\limits_{\alpha_0 \to 0} \text{Var}(p_{i1}) = \lim\limits_{\alpha_0 \to 0} \text{Var}(p_{i2}) = {\Pr}_1 {\Pr}_2.$$ That is to say,  $\bm p_i $  will converge to  a 0-1  distribution. Consequently,
 when $\bm p_i$ takes the values $\{0, 1\}$ or $\{1, 0\}$, it implies that multiple independent experts will uniformly assign all the manual labels $\tilde{y}_i$ to be either 1 or 0. In other words, the value of $S_i^{(1)} (S_i^{(0)})$ will be either $m(0)$ or $0(m)$ .
    \item  When $\alpha_0$ approaches infinity ($\bm p_i$ is nearly equal to $\bm \Pr$), the excess variance vanishes, and the Dirichlet-Multinomial(Beta-Multinomial) model \eqref{d2} converges to the Multinomial distribution \eqref{d1}. 
\end{itemize}
The Dirichlet-Multinomial model not only maintains a relationship between the probability of a labeling error and classification difficulty but also relaxes the assumptions about the accuracy of manual annotations compared to \eqref{d1}. 
This scenario is analogous to that in Section 2.3 of \cite{Ahfock2021HarmlessLN}, where we explicitly establish a link between the probability of labeling error and the classification difficulty through the posterior class-probabilities. 
\section{Variable Selection and Relative Efficiency with  Manual Labeling}\label{sec3}
In this section, we  first develop the corresponding statistical theories, which are similar to those in Section \ref{sec21}, under the manual labeling stated in Section \ref{sec22}. Then, leveraging the derived asymptotic distribution, we further compute the asymptotic relative efficiency of ALASSO-PLR under manually-labeled (with noise) data in comparison to that under truth labels. 

Let $\bm{X} = (\bm{x}_1^\top, \bm{x}_2^\top, \ldots, \bm{x}_n^\top)^\top$ denote the $n$ observed feature vectors, $\bm{y} = (y_1, y_2, \ldots, y_n)^\top$ denote the $n$ true labels, $\tilde{\bm{Y}} = (\tilde{\bm{Y}}_1^\top, \tilde{\bm{Y}}_2^\top, \ldots, \tilde{\bm{Y}}_n^\top)^\top$ be the manual labelings of the $n$ items by $m$ independent experts, and $\bm{S} = ( S_1^{(1)},  S_2^{(1)}, \ldots,  S_n^{(1)})^\top$ represent the vote counts for $ (\tilde{y}_1 =1, \tilde{y}_2 =1, \ldots, \tilde{y}_n =1 )^\top$ obtained from the manual labels.   The density function $ f(  S_i^{(1)} \mid \bm x, \bm \beta )$  can be expressed as 
\begin{align}\label{den2}
  f( S_i^{(1)} & \mid \bm x, \bm \beta ) = \left({p}(\bm x, \bm \beta)\right)^{ S_i^{(1)}} \left(1-{p}(\bm x, \bm \beta)\right)^{m - { S_i^{(1)}}}.  
\end{align}

Based on  \eqref{apqr} and \eqref{w1}, we can derive the loss function of ALASSO-PLR, denoted as $L(\bm{X}, \bm{y}, \bm{\beta})$, which is defined as 
\begin{align}\label{plr1} 
    -\sum_{i = 1}^{n} \left[ y_i \bm{\beta}^\top \bm{x}_i - \log\left(1 + \exp(\bm{\beta}^\top \bm{x}_i)\right)\right] 
    + \lambda\sum_{j = 1}^{d} w_j|\beta_j|, \notag
\end{align}
where $w_j = 1/|\beta^{'}_j|^{\gamma}$.
The  negative  empirical log-likelihood function using the aggregated  noisy manual labels  $\bm{S}$ (its density function is in \eqref{den2}) with ALASSO regularization is
\begin{equation}
\begin{aligned}\label{plr2} 
   L(\bm{X}, \bm{S}, \bm{\beta}) = -\sum_{i=1}^{n} \left[ S_i \bm{\beta}^\top \bm{x}_i  -  m \log\left(1  + \exp(\bm{\beta}^\top \bm{x}_i) \right) \right] + \lambda \sum_{j=1} ^{d} w_j|\beta_j|,
\end{aligned}     
\end{equation}   
where the first term in the above equation was used in \cite{Ahfock2021HarmlessLN} to derive the asymptotic distribution with aggregated  noisy manual labels in ordinary logistic regression (without a regularization term). Notably, when \(S_i\) only utilizes \(y_i\),  $L(\bm{X}, \bm{S}, \bm{\beta})$ will degenerate into $L(\bm{X}, \bm{y}, \bm{\beta})$. To distinguish $\hat{\bm \beta}$ (the estimator of ALASSO-PLR), let $\tilde{\bm \beta} \in  \arg \min \limits_{\bm \beta} L(\bm{X}, \bm{S}, \bm{\beta})$.

\subsection{Variable Selection and Asymptotic Normality}
Next, analogous to Section \ref{sec21}, we will derive the statistical theoretical properties of \eqref{plr2} {in the presence of label noise}. Before presenting statistical theory, we need the following two assumptions,
\begin{itemize}
  \item (A1) The Fisher information matrix $I(\boldsymbol{\beta}^*)$ is finite and positive definite.

  \item (A2) Let $\phi({\bm{\beta}^\top \bm{x}}) =  m \log\left(1  + \exp({\bm{\beta}^\top \bm{x}})\right)$, where 
 $\bm x$ is a $d$-dimensional predictor variable and $\bm{x} = (x_1,x_2,\dots,x_d)^\top$. There is a sufficiently large enough open set \( \mathcal{O} \) that contains the true coefficient \( \bm \beta^* \) such that \( \forall \, \bm \beta \in \mathcal{O} \),
\begin{equation}
\begin{aligned}
   |\phi'''({\bm{\beta}^\top \bm{x}})| \leq M(\bm{x}) < \infty  \quad \text{and} \quad \mathbb{E}[M(\bm{x})|x_j x_k x_l|] < \infty, \text{for all} 1 \leq j, k, l \leq d. 
\end{aligned}
\end{equation}
\end{itemize}
These conventional conditions have been widely used to establish the oracle property of PLR with convex \cite{Zou2006TheAL} and non-convex (see \cite{Fan2001VariableSV} and \cite{Fan2004NonconcavePL}) regularization terms.  They are easy to meet. For details, please refer to the discussion in Section 3.2 of \cite{Fan2001VariableSV}. 

Next,  we will give the oracle property of ALASSO-PLR estimator under manual labeling (with noise). The proof is placed in supplementary material 1.1.
\begin{thm}[Oracle Property]\label{th2}
Let $\tilde{\mathcal{A}} = \{j : \tilde{\boldsymbol{\beta}}_{j} \neq 0\}$. Suppose that $\frac{\lambda}{\sqrt{n}} \to 0$ and $\lambda n^{(\gamma - 1)/2} \to \infty$. Then, under the same assumptions (A1)-(A2), the ALASSO-PLR estimator $\tilde{\boldsymbol{\beta}}$ under manual labeling (with noise) must satisfy: 
\begin{enumerate}
    \item[(1)] \textbf{Consistency in variable selection}: \( \lim \limits_{n \to \infty} \Pr(\tilde{\mathcal{A}} = \mathcal{A} )  = 1 \).

    \item[(2)] \textbf{Asymptotic normality}: $$\sqrt{n} \left( \tilde{\boldsymbol{\beta}}_{\mathcal{A}} - \boldsymbol{\beta}^{*}_{\mathcal{A}} 
    \right) \xrightarrow{d} N\{0, \frac{m+ \alpha_0}{m(1+  \alpha_0)}I_\mathcal{A}^{-1}\}.$$ 
    
\end{enumerate}
\end{thm}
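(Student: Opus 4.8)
The plan is to follow the classical two-stage argument of Zou (2006, Theorem~2) for the adaptive LASSO, adapted to the manual-labeling loss $L(\bm X,\bm S,\bm\beta)$ in \eqref{plr2}. First I would establish $\sqrt n$-consistency and asymptotic normality of the unpenalized estimator built from $\bm S$: writing $\bm\psi_n(\bm\beta)=\sum_i \bm x_i\{S_i^{(1)}-m\,p(\bm x_i,\bm\beta)\}$ for the (scaled) score, a Taylor expansion of $\phi'''$ controlled by assumption (A2) and a standard argmin/convexity localization argument show that any minimizer $\hat{\bm\beta}^{(0)}$ of the unpenalized part satisfies $\sqrt n(\hat{\bm\beta}^{(0)}-\bm\beta^*)\xrightarrow{d} N(0,\bm\Sigma)$ with $\bm\Sigma = I(\bm\beta^*)^{-1}\,\mathrm{Var}\!\big(\bm x\{S^{(1)}-mp\}\big)\,I(\bm\beta^*)^{-1}$. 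The crucial computation here is the ``meat'' of the sandwich: because the Dirichlet-Multinomial gives $\mathrm{Var}(S_i^{(1)}\mid\bm x_i)=m\,\Pr_1(1-\Pr_1)\frac{m+\alpha_0}{1+\alpha_0}$ while the Fisher-information ``bread'' carries a factor $m$, one obtains $\bm\Sigma = \frac{m+\alpha_0}{m(1+\alpha_0)}\,I(\bm\beta^*)^{-1}$, which is exactly the inflation factor appearing in the theorem. (Note $\mathbb E(S_i^{(1)}\mid\bm x_i)=m\,p(\bm x_i,\bm\beta^*)$, so the score is unbiased and no bias term survives.)

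Next I would analyze the penalized objective. Reparametrize $\bm\beta = \bm\beta^* + \bm u/\sqrt n$ and define
\begin{equation}
\Psi_n(\bm u) = L\!\left(\bm X,\bm S,\bm\beta^*+\tfrac{\bm u}{\sqrt n}\right) - L(\bm X,\bm S,\bm\beta^*).
\end{equation}
The smooth (log-likelihood) part converges, by the Taylor expansion and a CLT, to a quadratic $-\bm v^\top\bm u + \tfrac12 \bm u^\top \frac{m(1+\alpha_0)}{m+\alpha_0}\bm\Sigma^{-1}\bm u$-type limit with $\bm v \sim N(0,\cdot)$; more precisely the relevant limiting quadratic has Hessian $I(\bm\beta^*)$ and the linear term has the inflated covariance, so that the unconstrained minimizer of the smooth part reproduces the distribution of Step~1. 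For the penalty term $\lambda\sum_j w_j|\beta_j|$ with $w_j=1/|\beta'_j|^\gamma$: for $j\in\mathcal A$, $\beta'_j\to\beta^*_j\neq0$ so $w_j=O_p(1)$ and $\sqrt n(|\beta^*_j+u_j/\sqrt n|-|\beta^*_j|)\to u_j\,\mathrm{sgn}(\beta^*_j)$, giving a contribution $\frac{\lambda}{\sqrt n}\cdot O_p(1)\to 0$ under $\lambda/\sqrt n\to0$; for $j\notin\mathcal A$, $\beta^*_j=0$ so $\sqrt n\,w_j|\beta^*_j+u_j/\sqrt n| = \lambda n^{(\gamma-1)/2}\,(\sqrt n|\beta'_j|)^{-\gamma}\,|u_j|$, and since $\sqrt n|\beta'_j|=O_p(1)$ (by the $\sqrt n$-consistency of the ordinary logistic estimator $\bm\beta'$) while $\lambda n^{(\gamma-1)/2}\to\infty$, this term diverges to $+\infty$ whenever $u_j\neq0$. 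Hence by the epi-convergence / argmin-consistency theorem (Geyer, or the Slutsky-type argument in Zou), $\Psi_n$ converges in distribution, in the sense of finite-dimensional convergence of convex functions, to a limit uniquely minimized on the event $\{u_j=0,\ \forall j\notin\mathcal A\}$, at an argmin whose $\mathcal A$-block is $N\!\big(0,\frac{m+\alpha_0}{m(1+\alpha_0)}I_\mathcal A^{-1}\big)$.

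From this I would extract both conclusions. Asymptotic normality (part 2) follows because $\hat{\bm u}_n=\sqrt n(\tilde{\bm\beta}-\bm\beta^*)$ is the argmin of $\Psi_n$ and, by convexity, argmin converges in distribution to the argmin of the limit, restricted to the $\mathcal A$-coordinates — giving $N\!\big(0,\frac{m+\alpha_0}{m(1+\alpha_0)}I_\mathcal A^{-1}\big)$. For selection consistency (part 1), the normality already yields $\Pr(j\in\tilde{\mathcal A})\to1$ for $j\in\mathcal A$; the reverse inclusion $\Pr(j\notin\tilde{\mathcal A})\to1$ for $j\notin\mathcal A$ needs the separate KKT/subgradient argument: if some $\tilde\beta_j\neq0$ for $j\notin\mathcal A$, the stationarity condition forces $|\,\bm x_{\cdot j}^\top(\bm S-m\bm p(\bm X,\tilde{\bm\beta}))\,| = \lambda w_j$, whose left side is $O_p(\sqrt n)$ while the right side $\lambda w_j = \lambda n^{-\gamma/2}(\sqrt n|\beta'_j|)^{-\gamma}\cdot n^{\gamma/2}$... more simply $\lambda w_j \gg \sqrt n$ because $\lambda n^{(\gamma-1)/2}\to\infty$ forces $\lambda w_j/\sqrt n\to\infty$, a contradiction with probability tending to one.

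The main obstacle I anticipate is Step~1 — specifically, getting the covariance of the limiting score exactly right and confirming that the inflated covariance of the unpenalized estimator transmits unchanged to the $\mathcal A$-block of the penalized estimator. The delicate points are: (i) verifying the score is exactly unbiased under the Dirichlet-Multinomial generating mechanism \eqref{d2} (using $\mathbb E(S_i^{(1)}\mid\bm x_i)=m\Pr_1$) so that no asymptotic bias contaminates the limit; (ii) correctly combining the ``bread'' $I(\bm\beta^*)$ (which comes from the expected Hessian of $-\sum_i[S_i\bm\beta^\top\bm x_i - m\log(1+e^{\bm\beta^\top\bm x_i})]$, itself proportional to $m$) with the ``meat'' $\mathrm{Var}(\bm x\,S^{(1)})$ to produce precisely the scalar $\frac{m+\alpha_0}{m(1+\alpha_0)}$; and (iii) making the epi-convergence argument rigorous in the presence of the random, data-dependent weights $w_j$ — one must invoke the joint $\sqrt n$-consistency of $\bm\beta'$ together with the data $(\bm X,\bm S)$, which is fine because $\bm\beta'$ can be taken as the ordinary logistic MLE computed on the \emph{same} manual-label data (or on an independent split), and in either case $\sqrt n|\beta'_j| = O_p(1)$ for $j\notin\mathcal A$. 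Everything else is a routine transcription of the adaptive-LASSO oracle proof.
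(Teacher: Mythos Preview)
Your proposal is correct and follows essentially the same route as the paper's proof: reparametrize $\bm\beta=\bm\beta^*+\bm u/\sqrt n$, Taylor-expand the smooth part into a score term with limiting covariance $\frac{m(m+\alpha_0)}{1+\alpha_0}I(\bm\beta^*)$ and a Hessian term $\frac{m}{2}\bm u^\top I(\bm\beta^*)\bm u$, handle the penalty exactly as in Zou (2006), apply the convex argmin theorem, and finish selection consistency via the KKT subgradient contradiction. The only cosmetic differences are that the paper does not isolate your ``Step~1'' (the unpenalized-estimator asymptotics) as a separate lemma but reads it off directly inside the penalized expansion, and that your sandwich formula as written, $I(\bm\beta^*)^{-1}\mathrm{Var}(\cdot)I(\bm\beta^*)^{-1}$, omits the $m$ in the bread --- your surrounding text (``the bread carries a factor $m$'') shows you have the right calculation, so just be careful to write the Hessian as $mI(\bm\beta^*)$ when you formalize it.
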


Under the Dirichlet-Binomial model \eqref{d2}, the ALASSO-PLR method using the noisy labels $\tilde{\bm Y}$ gives a consistent estimator of the true coefficient $\bm \beta^*$ with the oracle property.  When $m =1$ or $\alpha_0 \to 0$,  $\sqrt{n} \left( \tilde{\boldsymbol{\beta}}_{\mathcal{A}} - \boldsymbol{\beta}^{*}_{\mathcal{A}} 
    \right) \to N\{0, I_\mathcal{A}^{-1}\}$, which is the same as the asymptotic distribution of PLR (under truth labels) in Section \ref{sec212}.
    The following will briefly explain the reasons behind this situation. 
 \begin{itemize}
    \item  When $m = 1$,  there is only one expert providing manual labels. In this case, the statistical information contained is the same as that of the truth label $\bm y$. Therefore, the asymptotic distributions are identical.
    \item  When $\alpha_0$ approaches 0, as discussed in Section \ref{sec31}, the manual labels provided by all experts are identical. This situation is equivalent to having a single expert manually assign the labels.
\end{itemize}
In fact, when \(m > 1\) and \(\alpha_0 > 0\), more statistical information is provided compared to the case of \(m = 1\) and \(\alpha\to0\).  When \(m = 1\), that is, the manual label from one expert provides the same statistical information as the truth label. As the number of experts increases, more statistical information will be provided. When \(\alpha_0\) is larger, \(\bm p_i\) will be closer to the true posterior probability, and the manual labels will be more accurate compared to the situation where \(\alpha_0\to0\).  This means that the ALASSO-PLR estimator can benefit from manual labels (with noise). Subsequently,  we will quantify this phenomenon using asymptotic relative efficiency.

\subsection{Asymptotic Relative Efficiency}\label{sec32}
The asymptotic relative efficiency of ALASSO-PLR with manual labels, when compared to its efficiency under the truth label, is defined as:
\begin{align}\label{are}
    \text{ARE}=\lim_{n\rightarrow\infty}\frac{\mathbb{E}\{\text{err}(\hat{\bm \beta}; \bm \beta^*)\}-\text{err}(\bm \beta^*; \bm \beta^*)}{\mathbb{E}\{\text{err}(\tilde{\bm \beta}; \bm \beta^*)\}-\text{err}(\bm \beta^*; \bm \beta^*)}.
\end{align}
Here, \(\text{err}(\bm \beta; \bm \beta^*) =  \sum \limits_{i =1}^n \Pr(y_i^{'} \ne y_i \mid  \bm \beta^* ) \) represents the conditional error rate when the estimate \(\bm \beta\) is employed while the  true coefficient is \(\bm \beta^*\), where $y_i^{'}$ is the predicted label based on the estimated coefficient \(\bm \beta\).  ARE is an indicator in statistics used to compare the efficiency of two estimators. It depicts the relative performance of the two estimators as the sample size approaches infinity. Specifically, the ARE measures the ratio of the sample sizes required for the two estimators to achieve the same level of precision. 

Based on  Theorem \ref{th2}, we can obtain the following asymptotic relative efficiency. Its proof is in supplementary material 1.2.
\begin{thm}[Asymptotic Relative Efficiency]\label{th3}
The asymptotic relative efficiency of ALASSO-PLR with manual labels (with noise), as compared to that under the truth label, is
\begin{align}\label{are1}
\text{ARE} = \frac{m(1+\alpha_0)}{m+\alpha_0}.
\end{align}
\end{thm}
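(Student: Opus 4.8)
The plan is to reduce the asymptotic relative efficiency computation to a comparison of the asymptotic variances derived in Theorem \ref{th2}, after linearizing the excess conditional error rate around the true coefficient $\bm \beta^*$. First I would observe that $\text{err}(\bm \beta; \bm \beta^*) = \sum_{i=1}^n \Pr(y_i' \ne y_i \mid \bm \beta^*)$ is minimized at $\bm \beta = \bm \beta^*$, so the numerator and denominator of \eqref{are} are both nonnegative excess-risk quantities with a stationary point at $\bm \beta^*$. A second-order Taylor expansion of $\text{err}(\cdot; \bm \beta^*)$ in its first argument about $\bm \beta^*$ then gives $\text{err}(\bm \beta; \bm \beta^*) - \text{err}(\bm \beta^*; \bm \beta^*) = \tfrac{1}{2}(\bm \beta - \bm \beta^*)^\top H (\bm \beta - \bm \beta^*) + o(\|\bm \beta - \bm \beta^*\|^2)$, where $H$ is the Hessian of the (population, scaled by $n$) error rate at $\bm \beta^*$; the linear term vanishes by first-order optimality. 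Because both estimators are $\sqrt{n}$-consistent and, by the oracle property, supported on $\mathcal{A}$ asymptotically, only the $\mathcal{A}$-block of $H$ matters, and I would write $H_\mathcal{A}$ for it.

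Next I would plug in the asymptotic distributions from Theorem \ref{th2}. For the truth-label estimator, $\sqrt{n}(\hat{\bm \beta}_{\mathcal{A}} - \bm \beta^*_{\mathcal{A}}) \xrightarrow{d} N(0, I_\mathcal{A}^{-1})$, while for the manual-label estimator $\sqrt{n}(\tilde{\bm \beta}_{\mathcal{A}} - \bm \beta^*_{\mathcal{A}}) \xrightarrow{d} N\!\left(0, \tfrac{m+\alpha_0}{m(1+\alpha_0)} I_\mathcal{A}^{-1}\right)$. Taking expectations of the quadratic forms (justified by uniform integrability / the standard argument that the rescaled excess risk converges in distribution to a quadratic form in the limiting Gaussian, so its expectation converges), the numerator behaves like $\tfrac{1}{2n}\,\mathbb{E}\big[Z^\top H_\mathcal{A} Z\big]$ with $Z \sim N(0, I_\mathcal{A}^{-1})$, i.e. $\tfrac{1}{2n}\operatorname{tr}(H_\mathcal{A} I_\mathcal{A}^{-1})$, and the denominator like $\tfrac{1}{2n}\cdot \tfrac{m+\alpha_0}{m(1+\alpha_0)}\operatorname{tr}(H_\mathcal{A} I_\mathcal{A}^{-1})$. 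Taking the ratio, the common factor $\tfrac{1}{2n}\operatorname{tr}(H_\mathcal{A} I_\mathcal{A}^{-1})$ cancels — crucially, $H_\mathcal{A}$ need never be computed explicitly — leaving $\text{ARE} = \dfrac{m(1+\alpha_0)}{m+\alpha_0}$, which is \eqref{are1}.

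The main obstacle, and the step I would be most careful about, is making the interchange of limit and expectation rigorous: the definition \eqref{are} takes $\mathbb{E}$ of the finite-$n$ error rate and then $n \to \infty$, whereas the Gaussian approximation is a statement about convergence in distribution of $\sqrt{n}(\hat{\bm\beta}_{\mathcal A}-\bm\beta^*_{\mathcal A})$. To pass from one to the other I need that $n\big(\text{err}(\hat{\bm\beta};\bm\beta^*)-\text{err}(\bm\beta^*;\bm\beta^*)\big)$ is uniformly integrable, which follows from the $\sqrt n$-consistency together with the boundedness of the remainder in the Taylor expansion (guaranteed by assumption (A2), since $\text{err}$ inherits smoothness from the logistic link) and the fact that error rates are bounded in $[0,n]$. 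I would also need to handle the contribution of the (asymptotically vanishing) probability that $\tilde{\mathcal A} \ne \mathcal A$ or $\hat{\mathcal A}\ne\mathcal A$ — on that event the estimator may put mass outside $\mathcal A$, but by Theorem \ref{th2}(1) this event has probability $o(1)$ and, combined with uniform integrability, contributes negligibly to the expectation. Once these technical points are dispatched, the remaining algebra is exactly the cancellation described above, and the identity $\tfrac{m+\alpha_0}{m(1+\alpha_0)}$ in the variance inverts to the claimed ARE.
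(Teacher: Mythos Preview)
Your proposal is correct and follows essentially the same route as the paper: Taylor-expand $\text{err}(\cdot;\bm\beta^*)$ to second order at $\bm\beta^*$, restrict to the $\mathcal{A}$-block via the oracle property, take expectations of the resulting quadratic form using the asymptotic covariances from Theorem~\ref{th2}, and cancel the common trace factor. The only cosmetic difference is that you kill the linear term by first-order optimality of $\bm\beta^*$, whereas the paper invokes $\mathbb{E}(\hat{\bm\beta}_{\mathcal A}-\bm\beta^*_{\mathcal A})=\bm 0$; your justification is arguably cleaner, and your explicit attention to uniform integrability and the event $\{\tilde{\mathcal A}\ne\mathcal A\}$ goes beyond what the paper spells out.
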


In the context of statistical analysis, the behavior of the ARE shows a clear relationship with the parameters \(m\) and \(\alpha_0\). This relationship can be broken down into the following key points:

\begin{enumerate}
    \item \textbf{Baseline or equilibrium state}: When \(m = 1\), the \(\text{ARE}\) is exactly 1. Similarly, as \(\alpha_0\) approaches 0, the \(\text{ARE}\) converges to 1. This indicates a baseline or equilibrium state of relative efficiency between the estimators being compared.
    \item \textbf{Superior efficiency range}: When either \(m>1\) or \(\alpha_0 > 0\), the \(\text{ARE}\) exceeds 1. This implies that within these parameter ranges, the ALASSO-PLR model with manual labels exhibits superior efficiency compared to the one under the truth label.
    \item \textbf{Influence of \(m\) at extreme values}: As \(m\) tends towards infinity, the \(\text{ARE}\) becomes \((1 + \alpha_0)\). This suggests that in such extreme cases, \(\alpha_0\) plays a decisive role in determining the relative efficiency. At this point, the larger the value of $\alpha_0$ (indicating a clearer understanding of the posterior probability), the higher the efficiency of the ALASSO-PLR with manual labels.
    \item \textbf{Influence of \(\alpha_0\) at extreme values}: When \(\alpha_0\) approaches infinity, the \(\text{ARE}\) is equal to \(m\). This highlights that \(m\) dominates in dictating the relative efficiency under this extreme scenario. At this point, as the number of experts increases ($m$ increases), the higher the efficiency of the ALASSO-PLR with manual labels.
\end{enumerate}
From the above discussion, it can be concluded that the clearer our understanding of the posterior probability and the larger the number of experts, the higher the asymptotic relative efficiency estimated using the manually-generated labels (with noise) in Section \ref{sec2} of this paper. This indicates that such noise is useful for variable selection in ALASSO-PLR. 

\subsection{Discussions and Applications}\label{sec33}
Theorems \ref{th2} and \ref{th3} reveal that when manual labels are generated according to Section \ref{sec22}, even if noise occurs, it will not affect the variable selection results and improve asymptotic efficiency of ALASSO-PLR. In Section \ref{sec22}, we allow the manual labels  do not possess precise knowledge of the posterior probabilities of $\bm \Pr$.  This means that when generating manual labels, we do not necessarily need the true coefficient $\bm \beta^*$. An appropriate estimate of $\bm \beta^*$ is sufficient, of course, the more accurate the estimate (the larger the $\alpha_0$), the better the performance.

Therefore, the manual labeling method proposed in this paper has two applications for PLR as follows.
\begin{enumerate}
    \item  \textbf{Enhancing  variable selection in large-scale PLR.} For the collected data $\{\bm X, \bm y\}$, we can obtain a ordinary PLR estimator $\hat {\bm \beta}$ to replace $\bm \beta^*$ in \eqref{DB} generating manual labels. The use of these generated labels can enhance the coefficient estimation performance of PLR. The generation of these manual labels is easy, just a multinomial distribution, and does not require too much computational burden.
     \item \textbf{Filling in the missing labels for large-scale PLR.} When there are missing or Incomplete data in the collected data \cite{Xu2025IncompleteLD}, an estimated $\hat {\bm \beta}$ can be obtained from the existing data and then used to substitute this $\hat {\bm \beta}$ into the manual label generation model in \eqref{DB}. The manually generated labels can be employed to fill in the missing labels and improve the coefficient estimation performance of PLR.
\end{enumerate} 

\section{Partition-Insensitive Parallel Algorithm}\label{sec4}
Recall the optimization expression in \eqref{plr2}. Let $\bm{r} = (r_1, r_2, \dots, r_n)^\top = \bm{X}\bm{\beta} = (\bm{\beta}^\top\bm{x}_1, \bm{\beta}^\top\bm{x}_2, \dots, \bm{\beta}^\top\bm{x}_n)^\top$. Then, the equivalent constrained optimization form of (\ref{plr2}) is written as
\begin{align}\label{cplr} 
\min_{\bm{\beta},\bm{r}} \ & \sum_{i = 1}^{n} \left[ -S_i r_i + m \log\left(1 + \exp(r_i)\right) \right] + \lambda \|\bm{w} \odot \bm{\beta}\|_1, \notag \\
\text{s.t.} \ \ & r_i = \bm{\beta}^\top\bm{x}_i, \quad i = 1, 2, \dots, n.
\end{align}
where $\odot$ denotes Hadamard product. It is easy to see that (\ref{cplr}) is an optimization formula with equality constraints and no coupling terms for the optimization variables, which can be solved by using the alternating direction method of multipliers (ADMM) in \cite{Boyd2010DistributedOA}.

\subsection{ADMM for ALASSO-PLR with Manual Labels}
ADMM is an algorithm widely used in convex optimization problems (see \cite{Boyd2010DistributedOA,Jiao2026MarginalDR}) and can be applied to solve large-scale convex optimization problems with separable structures. This algorithm decomposes complex optimization problems into multiple simple subproblems and solves these subproblems through alternating iterations to gradually approach the optimal solution of the original problem.  The augmented Lagrangian form of (\ref{cplr}) is
\begin{align}\label{al}
L_\mu(\bm \beta, \bm r, \bm u) = \sum_{i=1}^{n} \left[ -S_i r_i +  m \log\left(1  + \exp({r_i})\right)  \right] + \lambda \|\bm w \odot \bm \beta \|_1  - \bm u^\top (\bm X \bm \beta  - \bm r ) +  \frac{\mu}{2}  \|  \bm X \bm \beta - \bm r  \|_2^2,
\end{align}    
where $\bm u$  is dual variable corresponding to the linear constraint, and $\mu>0$ is a given  augmented parameter.
The iterative scheme of ADMM for (\ref{cplr}) is
\begin{equation}\label{twoupadmm}
\left\{ \begin{array}{l}
\bm \beta^{k+1}  \leftarrow  \mathop {\arg \min }\limits_{\bm \beta} \left\{ L_\mu (\bm \beta, \bm r^k,  \bm u^k) \right \};\\
\bm r^{k+1}  \leftarrow  \mathop {\arg \min }\limits_{\bm r} \left\{ L_\mu (\bm \beta^{k+1}, \bm r, \bm u^k)\right \}; \\
\bm u^{k+1}  \leftarrow  \bm{u}^{k} - \mu(\bm X \bm \beta^{k+1} - \bm r^{k+1} ).
\end{array} \right.
\end{equation}
For the subproblem of updating $\bm \beta^{k + 1}$ in (\ref{twoupadmm}), we can rearrange the optimization equation and omit constant terms irrelevant to the target variable $\bm \beta$, yielding the following iterative formula,
\begin{equation}
\begin{aligned}\label{beta}
\bm \beta^{k+1} \leftarrow \arg \min_{\boldsymbol\beta}  & \left\{ \lambda \| 
 \bm w \odot \bm \beta \|_1 + \frac{\mu}{2}  \|  \bm X \bm \beta - \bm r^k  - \bm u^k/\mu \|_2^2  \right\}.
\end{aligned}
\end{equation}

Clearly, the non-identity matrix before the quadratic term of $\bm \beta$ and $\| 
\bm w \odot \bm \beta \|_1$ prevent iterative formula in  (\ref{beta}) from having a closed-form solution. While numerical methods like coordinate descent can solve it, they greatly increase the computational burden. We use the linearization method from \cite{Yuan2020DiscerningTL} and  \cite{Liang2024LinearizedAD} to approximate this optimization problem and obtain a closed-form solution for $\bm \beta^{k+1}$.  Specifically, we suggest adding a proximal term to the objective function in (\ref{beta}) and updating $\bm \beta^{k+1}$ with the following iterative formula,
\begin{align}\label{lbeta}
\bm \beta^{k+1} \leftarrow \arg \min_{\boldsymbol\beta} \left\{ \lambda \|\bm w \odot \bm \beta \|_1 + \frac{\mu}{2}  \|  \bm X \bm \beta - \bm r^k  - \bm u^k/\mu \|_2^2 + \frac{1}{2} \| \bm \beta - \bm \beta^k \|^2_{{\bm E}_{1}} \right\}.
\end{align}
where $\bm E_1 = \eta \bm I_p - \mu \bm X^\top \bm X$ is a positive semidefinite matrix. To guarantee that $\bm E_1$ is a positive definite matrix, $\eta$ should be no less than the largest eigenvalue of $\mu \bm X^\top \bm X$ (denoted as $\|\mu \bm X^\top \bm X\|$).  After rearranging the terms in (\ref{lbeta}), the update rule for $\bm \beta^{k + 1}$ can be expressed as
\begin{align}\label{lbeta2}
\bm \beta^{k+1} \leftarrow \arg \min_{\boldsymbol\beta} \left\{ \lambda \| 
 \bm w \odot \bm \beta \|_1 + \frac{\eta}{2} \| \bm \beta - \bm \beta^k + \frac{\mu \bm X^\top(\bm X \bm \beta^k - \bm r^k  - \bm u^k/\mu) }{\eta} \|_{2}^2 \right\}.
\end{align}    
As a result, the closed-form solution of \eqref{lbeta2} is a weighted soft-thresholding operator, that is, 
\begin{equation}
\begin{aligned}\label{lbeta3}
\bm \beta^{k+1}  \leftarrow \text{sign}\left(\bm \beta^k - \frac{\mu \bm X^\top(\bm X \bm \beta^k - \bm r^k  - \bm u^k/\mu) }{\eta} \right) \odot \left( \left|\bm \beta^k - \frac{\mu \bm X^\top(\bm X \bm \beta^k - \bm r^k  - \bm u^k/\mu) }{\eta} \right| - \lambda \bm w/\eta \right)_+.
\end{aligned}     
\end{equation}   

For the subproblem of updating $\bm r^{k + 1}$ in (\ref{twoupadmm}),  rearranging the optimization formula yields
\begin{align}\label{r} \notag
\bm r^{k+1} \leftarrow \arg \min_{\boldsymbol r} \left\{ \sum_{i=1}^{n} \left[ -S_i r_i +  m \log\left(1  + \exp({r_i})\right)  \right] + \frac{\mu}{2}  \|  \bm X \bm \beta^{k+1} - \bm r  - \bm u^k/\mu \|_2^2  \right\}.
\end{align}    
It is a system of nonlinear equations, and each component in the vector $\bm{r}$ can be solved element-by-element.  Due to the existence of $m\log\left(1  + \exp({r_i})\right)$, there is no closed-form solution for the above iterative formula. The Newton's iteration method is an effective algorithm for solving this system of nonlinear equations. Linearization techniques can also be used, which involves adding a quadratic term, that is, 
\begin{equation}
\begin{aligned}\label{lr0}
\bm r^{k+1} \leftarrow  \arg \min_{\boldsymbol r} \left\{ \sum_{i=1}^{n} \left[ -S_i r_i +  m \log\left(1  + \exp({r_i})\right)  \right] + \frac{\mu}{2}  \|  \bm X \bm \beta^{k+1} - \bm r  - \bm u^k/\mu \|_2^2 + \frac{1}{2} \| \bm r - \bm r^k   \|^2_{{\bm E}_{2}}  \right\},
\end{aligned}    
\end{equation}
where  $\bm E_2 =  (c_0 - o(1))\bm I_p$ with $c_0$ is an arbitrary positive constant. Note, when we use Newton's method to solve the system of linear equations, setting \(c_0 = o(1)\) suffices. Here, the \(o(1)\) is only used for describing the algorithm iteration process, and there is no need to know its specific value. Adding this quadratic term is to remove the remainder of the second-order Taylor expansion of $\phi(r_i) =m\log\left(1  + \exp({r_i})\right)$. To be specific, for each $i$, we have
\begin{align*}
\phi(r_i) + (1 -o(1)) (r_i -r_i^k)^2 & = \phi(r_i^k) + \phi^{'}(r_i^k)(r_i -r_i^k) + \frac{\phi^{''}(r_i^k) + o(1)}{2}(r_i -r_i^k)^2+ \frac{ c_0 -o(1)}{2} (r_i -r_i^k)^2 \\
& = \phi(r_i^k) + \phi^{'}(r_i^k)(r_i -r_i^k) + \frac{\phi^{''}(r_i^k) + c_0}{2}(r_i -r_i^k)^2.
\end{align*}
Then,  by rearranging the optimization equation  (\ref{lr0}), we obtain
\begin{align}\label{lr2} 
\bm r^{k+1} \leftarrow \arg \min_{\boldsymbol r} \left\{  (\phi^{'}(\bm r^k ) - \bm S)^\top\bm r + \frac{1}{2} \| \bm r - \bm r^k   \|^2_{\Phi(\bm r^k)} + \frac{\mu}{2}  \|  \bm X \bm \beta^{k+1} - \bm r  - \bm u^k/\mu \|_2^2  \right\},
\end{align}
where $\phi^{'}(\bm r^k) = (\phi^{'}(r_1^k),\phi^{'}(r_2^k),\dots,\phi^{'}(r_n^k))^\top$ and $\Phi(\bm r^k)$  is a diagonal matrix, and the diagonal elements are $\{\phi^{''}(r_i^k) +c_0 \}_{i=1}^n$. Obviously, \eqref{lr2} is a quadratic function, and the derivative can have the following closed-form solution, 
\begin{equation}
\begin{aligned}\label{lr3}
  \bm r^{k+1} \leftarrow \left[ \Phi(\bm r^k) + \mu \bm I_p   \right]^{-1}  \left[ \mu  ( \bm X \bm \beta^{k+1} - \bm u^k/\mu  ) + \Phi(\bm r^k) \bm r^k  -  \phi^{'}(\bm r^k) + \bm S \right].
\end{aligned}
\end{equation}

To sum up, the iteration of ADMM for (\ref{cplr}) can be described in Algorithm \ref{alg1}. Note that Algorithm \ref{alg1} can also be used to solve ALASSO-PLR under the truth label when $\bm S$ is generated by the real observation label $\bm y$.
\begin{algorithm}\small
\caption{\small{ADMM for ALASSO-PLR with manual labels}}
\label{alg1}
\begin{algorithmic}
\STATE {\textbf{Input:} observation data $\boldsymbol{X}$; manual label $\bm S$; primal variables $\boldsymbol{\beta}^0,\boldsymbol{r}^0$; dual variables $\boldsymbol{u}^0$; augmented parameters $\mu$; penalty parameter $\lambda$; and weight vector $\bm w$.}
\STATE {\textbf{Output:} the total number of iterations $K$,  $\boldsymbol{\beta}^K, \bm r^K, \bm u^K$. }
\STATE {\textbf{while} not converged \textbf{do}}
\STATE {\quad 1. Update $\boldsymbol{\beta}^{k+1}$ using (\ref{lbeta3})},
\STATE {\quad 2. Update $\boldsymbol{r}^{k+1}$ using (\ref{lr3})},
\STATE {\quad 3. Update $\boldsymbol{u}^{k+1}$ using $\bm u^{k+1}  \leftarrow  \bm{u}^{k} - \mu(\bm X \bm \beta^{k+1} - \bm r^{k+1} )$},
\STATE {\textbf{end while}}
\STATE {\textbf{return} solution}.
\end{algorithmic}
\end{algorithm}

The design details of Algorithm \ref{alg1} are entirely novel, even though the linearized ADMM algorithm has been a topic of long-standing discussion (for details, refer to Chapter 3 in \cite{Lin2022AlternatingDM} and the references therein).  Specifically, the existing ADMM algorithms in \cite{Yuan2020DiscerningTL} and \cite{Li2019AcceleratedAD} for PLR constructed equality constraints that replace $\bm \beta$ or its overlapping structure, while Algorithm \ref{alg1} focuses on $\bm X \bm \beta  = \bm r $. The use of constructing this equality constraint is convenient for the design of subsequent partitioning insensitive parallel algorithms. Moreover, Algorithm \ref{alg1} linearizes both $\bm \beta$ and $\bm r$. This two-variable linearization approach has been reported in \cite{Yuan2020DiscerningTL} (Algorithm 1) and \cite{Lin2022AlternatingDM} (Algorithms 3.2, 3.3).  Both Algorithm 1 in  \cite{Yuan2020DiscerningTL}  and Algorithm 3.2 in \cite{Lin2022AlternatingDM}   focus on linearizing the quadratic augmented term. This stands in contrast to Algorithm \ref{alg1} put forward in this paper. Specifically, in Algorithm \ref{alg1} of this paper, the first term solely represents the linearized quadratic augmented term, while the second term is only related to the linearization of the logistic regression loss function.  Algorithm 3.3 in \cite{Lin2022AlternatingDM}   simultaneously linearizes both the loss function and the quadratic augmented term for two subproblems, which also distinguishes it from Algorithm \ref{alg1} in this paper. Moreover, the linearization of its functions requires calculating the Lipschitz constant of the first-order derivative of some functions, differing from the linearization scheme proposed in this paper. 

Next, we will discuss the convergence and convergence rate of Algorithm \ref{alg1}. Its proof can be found in supplementary material 1.3.

\begin{thm}\label{th4}
Let the sequence \( \left\{ \bm v^k= ( \bm{\beta}^k, \bm r^k, \bm u^k) \right\} \) be generated by Algorithm \ref{alg1}.
\begin{enumerate}
\item (Algorithm global convergence). It converges to some  \( \bm v^\infty =  ( \bm{\beta}^\infty, \bm r^\infty, \bm u^\infty ) \)  that belongs to $ \Omega^*$, where  $ \Omega^*$ is the set of optimal solutions for (\ref{cplr}).
\item (Sublinear convergence rate).  For any integer \(K > 0\), we have
\begin{align}\label{p25}
\|\bm v^K - \bm v^{K+1} \|_{\bm{H}}^2 \leq \frac{1}{c_0 \left(K+1\right)}  \|\bm v^0 - \bm v^*\|_{\bm{H}}^2,
\end{align}    
where $\bm v^* \in \Omega^*$, $c_0$ is a positive constant, $\| \bm v \|_{\bm H} = \sqrt{\bm v^\top \bm H \bm v }$ and   $\bm H = \begin{pmatrix}
\bm E_1 & 0 & 0 \\
0 & \mu \bm I_n + \bm E_2 & 0 \\
0 & 0 & \frac{1}{\mu}\bm I_n
\end{pmatrix}$ is a  positive definite matrix.
\end{enumerate}
\end{thm}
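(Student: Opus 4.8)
The plan is to recognize Algorithm~\ref{alg1} as a proximal (linearized) ADMM applied to the two-block separable problem \eqref{cplr}, with first block $f(\bm\beta)=\lambda\|\bm w\odot\bm\beta\|_1$, second block $g(\bm r)=\sum_{i=1}^n\bigl[-S_ir_i+m\log(1+\exp(r_i))\bigr]$, and coupling constraint $\bm X\bm\beta-\bm r=\bm 0$, and then to run the He--Yuan variational-inequality argument for this class of schemes. First I would write down the KKT system characterizing $\Omega^*$: a point $\bm v^*=(\bm\beta^*,\bm r^*,\bm u^*)$ solves \eqref{cplr} iff $\bm X^\top\bm u^*\in\partial f(\bm\beta^*)$, $\bm u^*+\nabla g(\bm r^*)=\bm 0$, and $\bm X\bm\beta^*=\bm r^*$; since $f$ is closed convex and $g$ is convex with globally bounded second and third derivatives, this is exactly the set of global minimizers. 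Then I would record the first-order optimality conditions of the three updates in \eqref{twoupadmm}. The $\bm\beta$-update \eqref{lbeta} is the standard linearized step with proximal matrix $\bm E_1=\eta\bm I-\mu\bm X^\top\bm X$, positive definite by the choice of $\eta$; and, crucially, the Taylor-with-remainder identity displayed just before \eqref{lr2} shows that the $\bm r$-update \eqref{lr0} coincides with the genuine proximal step $\bm r^{k+1}=\arg\min_{\bm r}\bigl\{g(\bm r)+\tfrac12\|\bm r-\bm r^k\|^2_{\bm E_2}+\tfrac\mu2\|\bm X\bm\beta^{k+1}-\bm r-\bm u^k/\mu\|_2^2\bigr\}$, the quadratic remainder of $\phi(r_i)=m\log(1+\exp(r_i))$ being absorbed into the $o(1)$ correction inside $\bm E_2=(c_0-o(1))\bm I$.

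Next, the core step is a contraction (Fejér-type) inequality. Combining the three optimality conditions with the convexity of $f$ and $g$ (equivalently, monotonicity of $\partial f$ and $\nabla g$), the multiplier recursion $\bm u^{k+1}=\bm u^k-\mu(\bm X\bm\beta^{k+1}-\bm r^{k+1})$, and the elementary identity $2(\bm a-\bm b)^\top\bm H(\bm c-\bm d)=\|\bm a-\bm d\|_{\bm H}^2-\|\bm a-\bm c\|_{\bm H}^2+\|\bm b-\bm c\|_{\bm H}^2-\|\bm b-\bm d\|_{\bm H}^2$ with the block matrix $\bm H=\mathrm{diag}\bigl(\bm E_1,\ \mu\bm I_n+\bm E_2,\ \tfrac1\mu\bm I_n\bigr)$ (positive definite under the stated conditions), I would derive, for every $\bm v^*\in\Omega^*$,
$$\|\bm v^{k+1}-\bm v^*\|_{\bm H}^2\ \le\ \|\bm v^{k}-\bm v^*\|_{\bm H}^2\ -\ c_0\|\bm v^{k}-\bm v^{k+1}\|_{\bm H}^2\ -\ (\text{further nonnegative terms}),$$
where the constant $c_0$ enters through the $\bm E_2$-block curvature margin. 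Part~(1) then follows classically: $\{\|\bm v^k-\bm v^*\|_{\bm H}\}$ is nonincreasing, hence $\{\bm v^k\}$ is bounded and $\|\bm v^k-\bm v^{k+1}\|_{\bm H}\to0$; letting $k\to\infty$ along any convergent subsequence in the subproblem optimality conditions — using continuity of $\nabla g$ and closedness of the graph of $\partial f$ — shows every cluster point lies in $\Omega^*$; finally, choosing such a cluster point $\bm v^\infty$ as $\bm v^*$ in the contraction inequality makes $\|\bm v^k-\bm v^\infty\|_{\bm H}$ nonincreasing with a subsequence tending to $0$, so the whole sequence converges to $\bm v^\infty\in\Omega^*$.

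For part~(2) I would first prove the auxiliary monotonicity lemma that $\|\bm v^{k}-\bm v^{k+1}\|_{\bm H}^2$ is nonincreasing in $k$: write the optimality conditions at steps $k$ and $k-1$, subtract, and use monotonicity of $\partial f$ and $\nabla g$ together with the same identity to dominate $\|\bm v^{k}-\bm v^{k+1}\|_{\bm H}^2$ by $\|\bm v^{k-1}-\bm v^{k}\|_{\bm H}^2$. Telescoping the contraction inequality over $k=0,\dots,K$ gives $c_0\sum_{k=0}^{K}\|\bm v^{k}-\bm v^{k+1}\|_{\bm H}^2\le\|\bm v^0-\bm v^*\|_{\bm H}^2$, and combining this with the monotonicity lemma through $(K+1)\,c_0\|\bm v^{K}-\bm v^{K+1}\|_{\bm H}^2\le c_0\sum_{k=0}^{K}\|\bm v^{k}-\bm v^{k+1}\|_{\bm H}^2$ yields exactly \eqref{p25}.

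I expect the main obstacle to be the nonlinearity of $g$: unlike textbook ADMM for \eqref{cplr}, the $\bm r$-update is a second-order surrogate step, so the proof must rigorously justify that the second-order Taylor remainder of $\phi$ is dominated by the $o(1)$ slack in $\bm E_2$ without destroying the positive definiteness of $\bm H$ or the sign of the ``further nonnegative terms'' above, and it must carry the term $\nabla g(\bm r^{k+1})-\nabla g(\bm r^*)$ through the estimates using only monotonicity rather than a global Lipschitz constant. The other delicate point — establishing the monotone decay of $\|\bm v^{k}-\bm v^{k+1}\|_{\bm H}^2$ in the presence of this surrogate — is the usual sticking point in non-ergodic $O(1/K)$ proofs; the remaining manipulations are routine He--Yuan bookkeeping.
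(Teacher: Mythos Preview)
Your proposal is correct and takes essentially the same He--Yuan variational-inequality route as the paper. The only difference is presentational: the paper explicitly casts Algorithm~\ref{alg1} into the prediction--correction template (introducing an auxiliary predictor $\check{\bm v}^k=(\bm\beta^{k+1},\bm r^{k+1},\bm u^k-\mu(\bm X\bm\beta^{k+1}-\bm r^k))$ and matrices $\bm Q,\bm M,\bm G$ satisfying $\bm H\bm M=\bm Q$ and $\bm G=\bm Q^\top+\bm Q-\bm M^\top\bm H\bm M$) and then cites existing theorems of He for the Fej\'er inequality and the monotone decay of $\|\bm v^k-\bm v^{k+1}\|_{\bm H}^2$, after which the telescoping argument for part~(2) is exactly the one you outline.
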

\begin{rem}
 Evidently, the $\bm{H}$-norm squared $\|\bm{v}^0 - \bm{v}^*\|_{\bm{H}}^2$ is a positive constant. Consequently, the $\bm{H}$-norm squared $\|\bm{v}^K - \bm{v}^{K + 1}\|_{\bm{H}}^2$ is bounded by $\mathcal{O}\left(\frac{1}{K+1}\right)$, indicating a sublinear convergence rate.  In addition, when we use the proof method in Section 4 of \cite{He2015OnNC}, $c_0$ can be specifically taken as 1.
\end{rem}

Although the differences between Algorithm 3.3 in \cite{Lin2022AlternatingDM} and Algorithm \ref{alg1} have been discussed above, Algorithm \ref{alg1} can be regarded as a special case of Algorithm 3.3 in terms of the linearization strategy. However, regarding the algorithm convergence, the convergence conclusion of Theorem \ref{th4} differs from that presented in Chapter 3.2.1 of \cite{Lin2022AlternatingDM}. To be specific, Theorem 3.7 in  \cite{Lin2022AlternatingDM} discusses the sublinear convergence of the linearized algorithm. However, this sublinear convergence is concerned with the optimization objective function and constraints, while Theorem \ref{th4} in this paper focuses on the iterative point sequence. Moreover, in the course of proving this theorem, we showed that both \( \|\bm{v}^k - \bm{v}^{*}\|_{\bm{H}}^2 \) and \(  \|\bm{v}^k - \bm{v}^{k + 1}\|_{\bm{H}}^2  \) exhibit non-increasing monotonicity. Specifically, we have \( \|\bm{v}^{k + 1} - \bm{v}^{*}\|_{\bm{H}}^2   \leq \|\bm{v}^k - \bm{v}^{*}\|_{\bm{H}}^2 \) and \( \|\bm{v}^{k + 1} - \bm{v}^{k + 2}\|_{\bm{H}}^2   \leq \|\bm{v}^k - \bm{v}^{k + 1}\|_{\bm{H}}^2 \). For more details, refer to the propositions in the supplementary material. 

It is worth mentioning that Theorem 3.9 in \cite{Lin2022AlternatingDM} explores the linear convergence of the linearized algorithm, but it requires both objective functions to be strictly strongly convex. Nevertheless, in the optimization objective function of this paper, the adaptive LASSO is not strictly  strongly convex, thus failing to meet the requirements of Theorem 3.9 in \cite{Lin2022AlternatingDM}. For Algorithm \ref{alg1} in this paper, further theoretical analysis is required to achieve a linear convergence rate. We plan to conduct this work in the future.

\subsection{Parallel ADMM for ALASSO-PLR with Manual Labels}
In this paper, we primarily focus on large-scale data. This data is so massive that it is difficult to store on a single machine, or it is collected through multi-party synchronization. Therefore, in this subsection, the data is stored in a distributed manner.  Generally, designing algorithms for distributed parallel processing necessitates a central machine and multiple local sub-machines. Suppose that 
\begin{equation}
\begin{aligned}\label{mdata} 
\bm X =(\bm X_1^\top, \bm X_2^\top, \dots, \bm X_G^\top)^\top \ \ \text{and}  \ \  \bm S = (\bm S_1^\top, \bm S_2^\top, \dots, \bm S_G^\top)^\top
\end{aligned}    
\end{equation}
are distributedly stored across $G$ local sub-machines.  
Here $\bm S_g \in \mathbb{R}^{n_g}$,  $\bm{X}_g \in \mathbb{R}^{n_g \times p}$ and $\sum \limits_{g=1}^{G} n_{g} =n$.
The central machine takes in information from local sub-machines, consolidates it, and sends it back to relevant locals. The decomposition in (\ref{mdata}) makes the algorithm naturally parallel across multiple local sub-machines. Each local sub-machine can independently solve its allotted subproblem. Then, the central and local sub-machines communicate and coordinate solutions via variable updates. This distributed storage allows for parallel processing and efficient handling of large-scale data.  For a more detailed discussion, please refer to  Chapters 7 and 8 in \cite{Boyd2010DistributedOA}.

By setting $\bm r_g = \boldsymbol X_g \boldsymbol \beta$ for $g = 1, 2, \ldots, G$, the ALASSO-PLR algorithm can be reformulated as
\begin{align}\label{pcplr} \notag
\min_{\bm{\beta},\bm{r}_g} \ & \sum_{g=1}^G \sum_{i=1}^{n_g} \Big[ -S_i r_i +  m \log\left(1  + \exp({r_i})\right)  \Big] +\lambda \|\bm w \odot \bm \beta \|_1,\\
\text{s.t.} \ \ & \bm r_g = \boldsymbol X_g \boldsymbol \beta, \  g =1,2,\dots,G.
\end{align}
It is not difficult to verify that (\ref{cplr}) and (\ref{pcplr}) are equivalent. The augmented Lagrangian of (\ref{pcplr}) is given by
\begin{equation}
\begin{aligned}\label{alglasso} 
L_{\mu}(\bm \beta, \bm r_g, \bm u_g) & = \sum_{g=1}^G \sum_{i=1}^{n_g} \Big[ -S_i r_i +  m \log\left(1  + \exp({r_i})\right)  \Big] \\
& + \lambda \|\bm w \odot \bm \beta \|_1 - \sum_{g=1}^G \bm u_g^\top ( \boldsymbol X_g \boldsymbol \beta -  \bm r_g ) + \frac{\mu}{2} \sum\limits_{g=1}^{G} \|  \bm X_g \bm \beta - \bm r_g  \|_2^2,
\end{aligned} 
\end{equation}
where  $\bm u_g \in \mathbb{R}^{n_g}$ is the  dual variable and $(\bm u_1^\top, \bm u_2^\top,\dots,\bm u_G^\top)^\top = \bm u$. Similar to \eqref{twoupadmm}, the update of the  variables in the iterative process (with linearization) can be written as 
\begin{equation}\label{pupadmm}
\left\{ \begin{aligned}
\bm \beta^{k+1}  \leftarrow & \mathop {\arg \min }\limits_{\bm \beta} \left\{ L_\mu (\bm \beta, \bm r_g^k,  \bm u_g^k) + \frac{1}{2} \| \bm \beta - \bm \beta^k   \|^2_{{\bm E}_{1}} \right \};\\
\bm r_g^{k+1}  \leftarrow & \mathop {\arg \min }\limits_{\bm r} \left\{ L_\mu (\bm \beta^{k+1}, \bm r_g, \bm u_g^k) + (c_0 - o(1))  \| \bm r_g - \bm r_g^k   \|^2_{\bm I_{n_g}} \right \}, g =1,2,\dots,G; \\
\bm u_g^{k+1}  \leftarrow & \bm{u}_g^{k} - \mu(\bm X_g \bm \beta^{k+1} - \bm r_g^{k+1} ), g =1,2,\dots,G.
\end{aligned} \right.
\end{equation}    
Here, $\boldsymbol{\beta}$ is updated on the central machine, while $\boldsymbol{r}_g$ and $\boldsymbol{u}_g$ are updated on each local sub-machine. 

Looking back at the equation in \eqref{lbeta3}, in a distributed environment, the update of $\bm \beta$ encounters two issues. One issue is that the complete dataset $\boldsymbol{X}$ is unavailable (since $\boldsymbol{X}$ is stored in a distributed manner and we can only access $\bm{X}_g$). The other issue is the computation of the value of $\boldsymbol{\eta}$ based on $\boldsymbol{X}$.  Based on the suggestion in \cite{Wu2023PartitionPA}, we can reformulate the update of $\bm{\beta}$ in \eqref{lbeta2} as
\begin{equation}
\begin{aligned}\label{plbeta}
\bm{\beta}^{k + 1} \leftarrow \underset{\boldsymbol{\beta}}{\mathrm{argmin}}  \left\{ \lambda \left\lVert \bm{w} \odot \bm{\beta} \right\rVert_1 + \frac{\eta}{2} \left\lVert \bm{\beta} - \bm{\beta}^k + \sum_{g = 1}^{G} \bm{\xi}_g^k \right\rVert_2^2 \right\},
\end{aligned}    
\end{equation}
where $\bm{\xi}_g^k = \frac{\mu \bm{X}_g^\top(\bm{X}_g \bm{\beta}^k - \bm{r}_g^k - \bm{u}_g^k / \mu)}{\eta}$. At step $k$, $\bm{\xi}_g^k$ can be calculated in the $g$-th sub-machine and sent to the central machine. It can be obtained as $\bm{\beta}^k$ has been updated on the central machine and delivered to the $g$-th sub-machine where $\bm{X}_g$ is stored, and both $\bm{r}_g$ and $\bm{u}_g$ were updated in the previous step on this sub-machine.  Then, the update of the central machine is as follows
\begin{equation}
\begin{aligned}\label{plbeta2}
\bm \beta^{k+1} \leftarrow  \text{sign}  \left(  \bm \beta^k - \sum_{g = 1}^{G} \bm{\xi}_g^k \right) \odot \left( \left|\bm \beta^k - \sum_{g = 1}^{G} \bm{\xi}_g^k \right| - \frac{\lambda \bm w}{\eta} \right)_+.
\end{aligned}    
\end{equation}
Therefore, the first issue with the $\bm \beta$ update has been resolved. The second issue pertains to the calculation of $\eta$. The calculation of $\eta$ depends on $\bm{X}$. However, each sub-machine only has $\bm{X}_g$. According to the suggestion in \cite{Wu2023PartitionPA}, $\eta_g > \|\mu \bm{X}_g^\top \bm X_g\|$ for each sub-machine can be calculated first. Since $ \bm{X}^\top \bm X =  \sum \limits_{g=1}^G \bm{X}_g^\top \bm X_g$, we have $\sum \limits_{g=1}^G \eta_g > \eta $. Therefore, $\sum \limits_{g=1}^G \eta_g$ can be used to substitute the $\eta$ calculated from the full data. So far, two issues related to the update of $\bm \beta$ in distributed environments have been resolved.

Similar to the derivation of the closed form  solution shown in \eqref{lr3}, we can obtain the update rule for \(\bm r_g\) in a distributed environment,
 \begin{align}\label{plr} \notag
  \bm r_g^{k+1} \leftarrow\left[ \Phi(\bm r_g^k) + \mu \bm I_{n_g}   \right]^{-1}  \left[ \mu  ( \bm X_g \bm \beta^{k+1} - \bm u_g^k/\mu) + \Phi(\bm r_g^k) \bm r_g^k  -  \phi^{'}(\bm r_g^k) + \bm S_g \right],
\end{align}
where $\phi^{'}(\bm r_g^k) = (\phi^{'}(r_1^k),\phi^{'}(r_2^k),\dots,\phi^{'}(r_{n_g}^k))^\top$ and $\Phi(\bm r_g^k)$  is a diagonal matrix, and the diagonal elements are $\{\phi^{''}(r_i^k) +c_0 \}_{i=1}^{n_g}$.

\begin{figure*}
    \centering
    \includegraphics[width=0.9\linewidth]{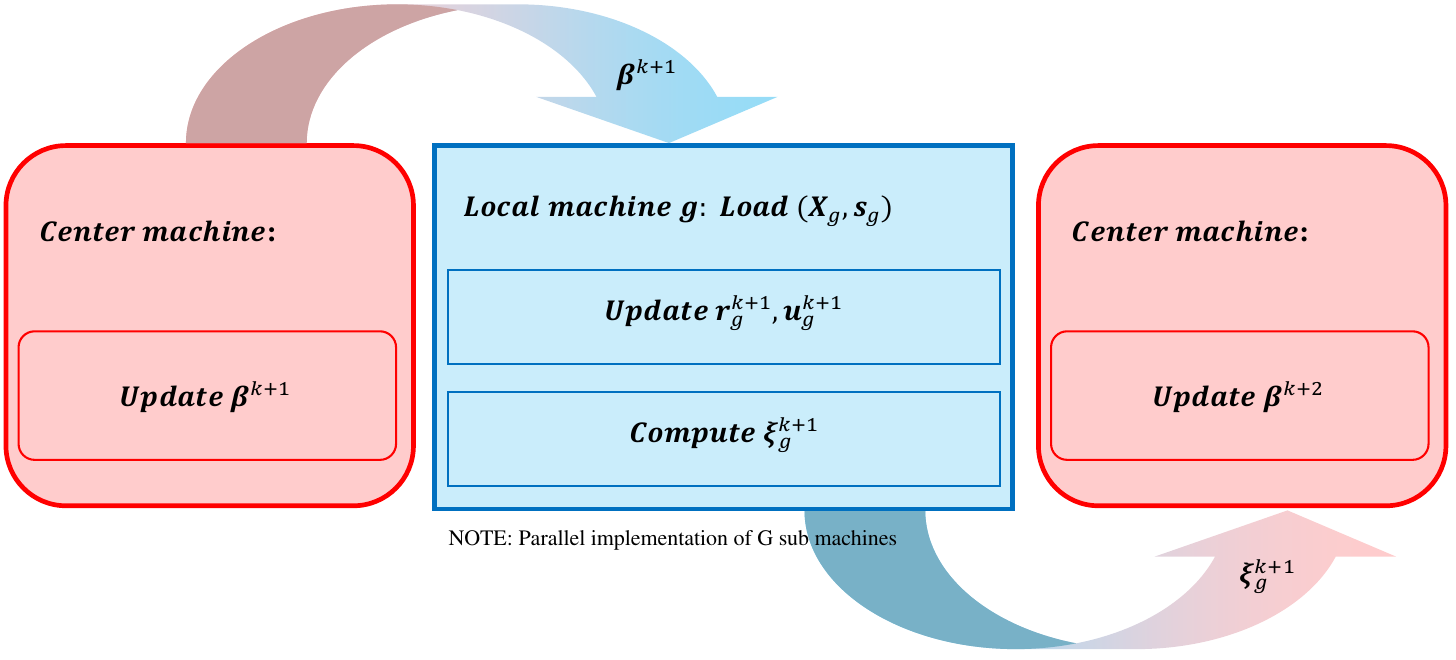}
    \caption{The implementation diagram of parallel algorithm.}
    \label{fig1}
\end{figure*}

We summarize this parallel ADMM algorithm for ALASSO-PLR with noise on Algorithm \ref{alg2}. The implementation diagram of parallel algorithm is shown in Figure \ref{fig1}.
\begin{algorithm}\small
\caption{\small{Parallel ADMM for ALASSO-PLR with noise}}
\label{alg2}
\begin{algorithmic}
\STATE {\textbf{Pre-computation}:  $\eta_g =\|\mu \bm X_g^\top \bm X_g \|$, $g= 1,2,\dots,G$.} 
\STATE {\textbf{Input:}  $\bullet$  central machine:  primal variable $\boldsymbol{\beta}^0$;  $\eta = \sum \limits_{g=1}^{G}\eta_g$;  penalty parameter $\lambda$; and weight vector $\bm w$. \\
\quad \  \ \ \ \  \ \ \  $\bullet$ the $g$-th local machine:  observation data $\boldsymbol{X}_g$; manual label $\bm S_g$; primal variables $\boldsymbol{\beta}^0,\boldsymbol{r}_g^0$;  dual variables $\boldsymbol{u}_g^0$; augmented parameter $\mu$; linearization parameters $\eta$; and $\bm \xi_g^0 =\frac{\mu \bm{X}_g^\top(\bm{X}_g \bm{\beta}^0 - \bm{r}_g^0 - \bm{u}_g^0 / \mu)}{\eta}$.}
\STATE {\textbf{Output:}  the total number of iterations $K$,  $\boldsymbol{\beta}^K, \bm r^K, \bm u^K$. }
\STATE {\textbf{while} not converged \textbf{do}}
\STATE {\ \textbf{Central machine}: \\
\quad 1. Receive $\eta_g$ (only once) and $\bm \xi_g^k$ transmitted by all  sub-machines,\\ 
\quad 2.  Update $\bm \beta^{k+1}$  using (\ref{plbeta2}), \\
\quad 3. Send $\bm \beta^{k+1}$ to the each sub-machines.
}
\STATE {\ \textbf{Local machines}: \ \  for $g =1 ,2, \dots, G$ (in parallel) \\
\quad 1. Receive $\bm \beta^{k+1}$  transmitted by the central machine, \\
\quad 2.  Update $\bm r^{k+1} $ using \eqref{plr},  \\
\quad 3. Update $\bm u_g^{k+1}  \leftarrow  \bm{u}_g^{k} - \mu(\bm X_g \bm \beta^{k+1} - \bm r_g^{k+1} ),$ \\
\quad 4. Compute $\bm{\xi}_g^{k+1} = \frac{\mu \bm{X}_g^\top(\bm{X}_g \bm{\beta}^{k+1} - \bm{r}_g^{k+1} - \bm{u}_g^{k+1} / \mu)}{\eta}$, \\
\quad 5. Send $\bm \xi^{k+1}_g$  to the central machine.
}
\STATE {\textbf{end while}}
\STATE {\textbf{return} solution}.
\end{algorithmic}
\end{algorithm}

Next, we will theoretically elaborate on the equivalence between Algorithm \ref{alg1} and Algorithm \ref{alg2}. In fact, when $G = 1$, the two algorithms are identical. Therefore, the subsequent discussion will center on the case where $G \geq 2$ in Algorithm \ref{alg2}. To facilitate differentiation, let us denote $\left\{ \hat{\bm \beta}^k, \hat{\bm r}^k, \hat{\bm u}^k\right\}$ as the outcomes of the $k$-th iteration of Algorithm \ref{alg1}, and $\left\{ \bar{\bm \beta}^k, \bar{\bm r}^k, \bar{\bm u}^k \right\}$ as the outcomes of the $k$-th iteration of Algorithm \ref{alg2}.
Then, we can arrive at the following conclusion. 
\begin{thm}\label{TH1}
If we employ the same initial iteration variables $\{ \hat{\bm \beta}^0, \hat{\bm r}^0, \hat{\bm u}^0 \} = \{ \bar{\bm \beta}^0, \bar{\bm r}^0, \bar{\bm u}^0 \}$ and the same parameter $\eta$, the iterative solutions derived from these two algorithms are, in fact, identical. That is to say,
\begin{align}
\left\{ \hat{\bm \beta}^k, \hat{\bm r}^k, \hat{\bm d}^k\right\} = \left\{ \bar{\bm \beta}^k, \bar{\bm r}^k, \bar{\bm d}^k \right\}, \quad \text{for all } k.
\end{align}
\end{thm}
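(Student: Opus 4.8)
The plan is to prove the identity by induction on the iteration counter $k$, showing that at every step the two algorithms produce the same primal variable $\bm\beta$, the same block-stacked residual vector $\bm r = (\bm r_1^\top,\dots,\bm r_G^\top)^\top$, and the same block-stacked dual vector $\bm u = (\bm u_1^\top,\dots,\bm u_G^\top)^\top$. The base case $k=0$ holds by the hypothesis $\{\hat{\bm\beta}^0,\hat{\bm r}^0,\hat{\bm u}^0\} = \{\bar{\bm\beta}^0,\bar{\bm r}^0,\bar{\bm u}^0\}$ together with the requirement that both algorithms use the same $\eta$. For the inductive step, assume the $k$-th iterates agree; I would then verify the three updates in the order $\bm\beta \to \bm r \to \bm u$.

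First, for the $\bm\beta$-update, the key observation is the block decomposition $\bm X^\top \bm X = \sum_{g=1}^G \bm X_g^\top \bm X_g$ and $\bm X^\top(\bm X\bm\beta^k - \bm r^k - \bm u^k/\mu) = \sum_{g=1}^G \bm X_g^\top(\bm X_g\bm\beta^k - \bm r_g^k - \bm u_g^k/\mu)$, so that the aggregate quantity $\sum_{g=1}^G \bm\xi_g^k$ used in \eqref{plbeta2} is exactly $\tfrac{\mu}{\eta}\bm X^\top(\bm X\bm\beta^k - \bm r^k - \bm u^k/\mu)$, which is the term appearing inside the soft-thresholding operator in \eqref{lbeta3}. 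Since both algorithms apply the identical weighted soft-thresholding map with the same $\lambda$, $\bm w$, and $\eta$, the induction hypothesis gives $\hat{\bm\beta}^{k+1} = \bar{\bm\beta}^{k+1}$. Second, for the $\bm r$-update, I would note that the closed-form expression \eqref{lr3} is entirely separable across coordinates $i$: the $i$-th component of $\hat{\bm r}^{k+1}$ depends only on $\bm\beta^{k+1}$ through $\bm x_i^\top\bm\beta^{k+1}$, on $u_i^k$, and on $r_i^k$. Grouping the coordinates by sub-machine, the restriction of \eqref{lr3} to the indices in group $g$ is precisely the local update \eqref{plr} with $\bm X_g$, $\bm u_g^k$, $\bm r_g^k$. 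Hence $\hat{\bm r}^{k+1} = \bar{\bm r}^{k+1}$ block by block. Third, the $\bm u$-update is affine and also coordinatewise separable, $\bm u^{k+1} = \bm u^k - \mu(\bm X\bm\beta^{k+1} - \bm r^{k+1})$, whose $g$-th block is exactly $\bm u_g^k - \mu(\bm X_g\bm\beta^{k+1} - \bm r_g^{k+1})$; so $\hat{\bm u}^{k+1} = \bar{\bm u}^{k+1}$. This closes the induction.

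The only genuinely delicate point — and the one I would treat most carefully — is the equivalence of the linearization \emph{parameter} in the two $\bm\beta$-subproblems: Algorithm~\ref{alg1} uses $\eta \ge \|\mu\bm X^\top\bm X\|$ computed from the full design matrix, whereas Algorithm~\ref{alg2} uses $\eta = \sum_{g=1}^G \eta_g$ with each $\eta_g \ge \|\mu\bm X_g^\top\bm X_g\|$ computed locally. These are in general \emph{different} numbers, so the theorem must be read as a statement about running \emph{both} algorithms with \emph{the same} prescribed $\eta$ (as the hypothesis indeed stipulates, ``the same parameter $\eta$''); the inequality $\sum_g \eta_g \ge \|\mu\bm X^\top\bm X\|$ (which follows from $\bm X^\top\bm X = \sum_g \bm X_g^\top\bm X_g$ and positive semidefiniteness of each summand) serves only to guarantee that the \emph{common} value $\eta = \sum_g \eta_g$ is a valid choice making $\bm E_1 = \eta\bm I_p - \mu\bm X^\top\bm X$ positive semidefinite, so that the convergence guarantees of Theorem~\ref{th4} still apply. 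I would state this explicitly so the reader sees that ``partition-insensitivity'' means the output does not depend on $G$ \emph{once $\eta$ is fixed}, not that the naive local computation of $\eta$ reproduces the single-machine value. Everything else is a routine bookkeeping of the block structure, and no estimates or limiting arguments are needed.
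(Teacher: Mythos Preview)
Your proposal is correct and follows essentially the same approach as the paper: induction on $k$, with the $\bm\beta$-step handled via the identity $\sum_{g=1}^G \bm\xi_g^k = \tfrac{\mu}{\eta}\bm X^\top(\bm X\bm\beta^k - \bm r^k - \bm u^k/\mu)$, and the $\bm r$- and $\bm u$-steps handled by coordinatewise (hence blockwise) separability. Your explicit discussion of the $\eta$ subtlety is a welcome clarification that the paper also acknowledges in the remarks following the theorem.
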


The proof of Theorem \ref{TH1} can be found in supplementary material 1.4. The conclusion of Theorem \ref{TH1} indicates that when using the same initial values, the solutions of  Algorithm \ref{alg1} and Algorithm \ref{alg2}  will be the same, which is what Wu et al. \cite{Wu2023PartitionPA} called partition insensitivity. In parallel computing, insensitivity to data partitioning can enhance computational efficiency (by reducing communication overhead and increasing parallelism), improve system stability (by minimizing the risk of load imbalance and enhancing fault tolerance), and simplify algorithm design and implementation (by lowering design complexity and facilitating code maintenance).  However, in reality,$\sum \limits_{g=1}^G \eta_g$ (Algorithm \ref{alg2})  is usually greater than $\eta$ (Algorithm \ref{alg1}) in applications, but this does not affect the convergence of Algorithm \ref{alg2} with Theorem \ref{th4}. Simply replace $\eta$  in $\bm E_1$  with $\sum \limits_{g=1}^G \eta_g$ in Theorem \ref{th4}.
\section{Numerical Experiments}\label{sec5}
In this section, we conduct comprehensive evaluations of the proposed method, theoretical framework, and algorithms using both synthetic and real-world datasets. All computational experiments were performed in R on a workstation equipped with an AMD Ryzen 9 7950X 16-Core processor (4.50 GHz) and 32 GB RAM. In the process of parameter tuning, we utilize the modified HBIC criterion presented in \cite{Wu2023PartitionPA}. Moreover, the termination conditions of the algorithm adhere to those described in \cite{Boyd2010DistributedOA}. 
\subsection{Synthetic Data Experiment}\label{appE}
We illustrate the methodology using the penalized logistic regression model of \cite{Zou2006TheAL}. In this example, we simulated 100 datasets consisting of $n$ observations from the model \( y \sim  \text{Binomial}\{p(\bm{x}^\top \bm{\beta}^*)\} \), where \( p(u) = \exp(u)/(1 + \exp(u)) \) and \( \bm{\beta}^* = (3, 0, 0, 1.5, 0, 0, 7, 0, 0) \). The components of \( \bm{x} \) are standard normal, where the correlation between \( x_i \) and \( x_j \) is \( \rho = 0.75 \).

First,  we present the performance of the PLR with manual labels for different values of $m$ and $\alpha_0$. Figure \ref{fig2} shows how the coefficient estimation values change with $m$ and $\alpha_0$, where the estimations are averaged over 100 simulations. In the three sub-figures in the upper part of Figure \ref{fig2}, $\alpha_0$ is set to 1. From these figures, we can observe that as $m$ increases, the results of coefficient estimation get better, especially for the locus with a value of 7. In the three sub-figures in the lower half of Figure \ref{fig2}, $m$ is set to 20. From these figures, we can observe that as $\alpha_0$ increases, the coefficient estimation performance improves, particularly at the positions with values of 1.5 and 7. 
\begin{figure}
    \centering
    \includegraphics[width=0.8\linewidth]{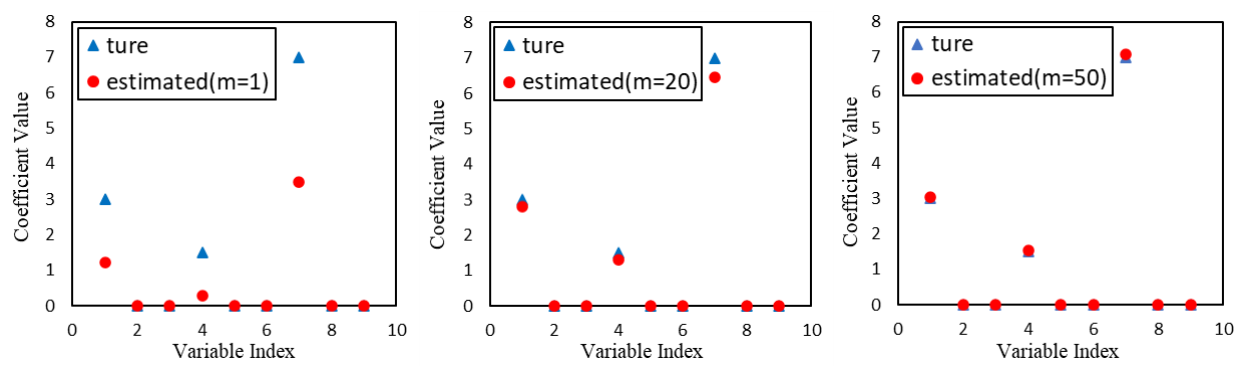}
    \includegraphics[width=0.8\linewidth]{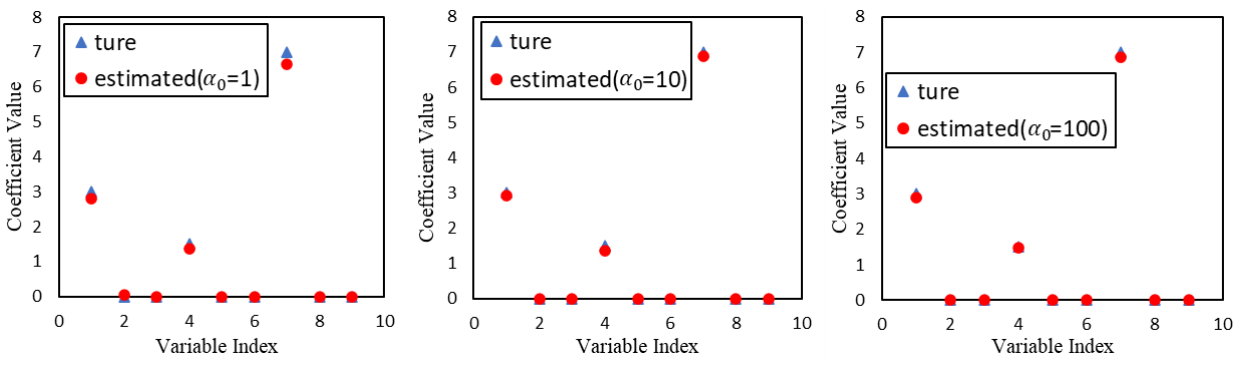}
    \caption{\footnotesize{Schematic diagram of coefficient estimation values changing with $m$ and $\alpha_0$}}
    \label{fig2}
\end{figure}

\begin{table}[H]
    \caption{{Simulated asymptotic relative efficiency (ARE) for different $m, n$ and $\alpha_0$.}}
    \centering
    \renewcommand{\arraystretch}{1.5}
     \resizebox{0.7\textwidth}{!}{
\begin{tabular}{cccccccc}
\hline
$m$ & $\alpha_0$ & ARE & $n = 100$ & $n = 200$ & $n = 500$ & $n = 1000$ & $n = 2000$\\
\hline
\multirow{4}{*}{5} & 1 & 1.67 & 1.55 (0.08) & 1.61 (0.08) & 1.72 (0.08) & 1.70 (0.07) & 1.68 (0.07)\\
  & 10 & 3.67 & 3.21 (0.16) & 3.26 (0.17) & 3.35 (0.19) & 3.54 (0.18)& 3.69 (0.11) \\
  & 100 & 4.81 & 4.27 (0.23) & 4.38 (0.24) & 4.45 (0.25) & 4.61 (0.24)& 4.79 (0.25) \\
  & 1000 & 4.98 & 4.33 (0.25) & 4.41 (0.26) & 4.53 (0.29) & 5.13 (0.31) & 4.95 (0.28)\\
\hline
\multirow{4}{*}{10} & 1 & 1.82 & 1.61 (0.09) & 1.66 (0.09) & 1.69 (0.10) & 1.72 (0.11) & 1.79 (0.12) \\
   & 10 & 5.50 & 5.09 (0.21) & 5.13 (0.22) & 5.24 (0.25) & 5.31 (0.24)& 5.40 (0.26) \\
   & 100 & 9.18 & 8.55 (0.39) & 8.62 (0.43) & 8.71 (0.44) & 8.89 (0.46) & 9.11 (0.49)\\
   & 1000 & 9.91 & 8.96 (0.42) & 9.08 (0.45) & 9.46 (0.48) & 9.63 (0.50)& 9.75 (0.52) \\
\hline
\multirow{4}{*}{50} & 1 & 1.96 & 1.73 (0.10) & 1.78 (0.11) & 1.82 (0.12) & 1.89 (0.13) & 1.93 (0.14)\\
   & 10 & 9.17 & 8.30 (0.35) & 8.42 (0.38) & 8.66 (0.41) & 8.89 (0.42) & 9.08 (0.43)\\
   & 100 & 33.67 & 28.91 (1.37) & 29.21 (1.45) & 30.24 (1.58) & 35.58 (1.85) & 34.07 (1.65)\\
   & 1000 & 47.67 & 40.33 (1.72) & 42.74 (1.78) & 44.17 (1.93) & 45.27 (2.04) & 48.01 (2.08)\\
\hline
\end{tabular}}
\label{tab3}
\end{table}

In Table \ref{tab3}, we compare the simulated asymptotic relative efficiency (ARE) for different values of $m$, $n$, and $\alpha_0$. The ARE values presented in the table represent the theoretical values. The estimated ARE values shown in the table are the average results of 100 simulations, with the standard deviation provided in parentheses.   In this table, as the values of $m$, $n$ and $\alpha_0$ increase, the estimated ARE will perform better and better. This is consistent with the conclusion of consistent estimators in statistics and the theoretical results revealed in this paper.

For large-scale data, we set $n$ to be $1,000,000$ and $2,000,000$ respectively, and conduct numerical simulations in a parallel computing environment where $G$ takes the values of $1, 5, 10$, and $20$. To verify the conclusion of Theorem \ref{TH1}, the algorithms proposed in this paper all use $\eta = \|\mu \bm X^\top \bm X\| + 0.01$.
Most of the existing parallel algorithms designed for distributed storage data are inspired by \cite{Boyd2010DistributedOA}. They utilize consensus constraints to attain algorithm parallelism. For more details, refer to \cite{Ding2024AGA} and the references cited therein. Here, we will compare the partition insensitive parallel ADMM algorithm (PIPADMM) proposed in this paper with the consensus structure parallel ADMM algorithm (CPADMM) provided in \cite{Ding2024AGA} for solving  PLR. Table \ref{tab22} presents the mean results of 100 independent simulations, with the standard deviation in parentheses. The following is an explanation of the various indicators in the table. FP is a false positive (zero coefficient is estimated to be non-zero), FN is a false negative (non-zero coefficient is estimated to be zero), AE represents the absolute error of estimation ($\| \hat{\bm \beta} - \bm \beta^* \|_1$), Ite is the number of iterations, and Time represents the computer running time. 

One of the most prominent manifestations is that as the number of parallel machines increases, in addition to a significant reduction in the computation time of PIPADMM, all other indicators remain unchanged. This is also what Theorem \ref{TH1} in this paper refers to as partition insensitivity, that is, regardless of how the samples are partitioned, the solution of each step of the PIPADMM algorithm remains the same. The reduction in computation time is due to the parallel update of \(\bm r\). In contrast, for the CPADMM algorithm, as the value of \(G\) increases, the quality of the solution (FP, AE) deteriorates, and the number of iterations (Ite) increases. Similarly, due to parallel computing, the computation time significantly decreases. The decline in solution quality and the increase in the number of iterations are due to the inherent flaws of the parallel ADMM algorithm with a consensus structure. As the number of machines increases, more consensus constraints are constructed, leading to more variables to be updated in the ADMM algorithm. This results in lower solution accuracy and slower algorithm convergence, which is also observed in \cite{Wu2025AUC} when solving  linear regression models. In the comparison between the two parallel ADMM algorithms, PIPADMM outperforms in all metrics. Due to its theoretical property of partition insensitivity, PIPADMM will perform even better than CPADMM as the value of \(G\) increases.  Overall, the experimental results in Table \ref{tab22} indicate that our proposed PIPADMM algorithm has more stable, accurate, and faster computational performance in computing PLR compared to CPADMM algorithms in parallel computing environments.

\begin{table}[H]
\caption{{Comparison results of  parallel ADMM algorithms under  large-scale data.}}
\label{tab22}
\centering
\renewcommand{\arraystretch}{1.5}
\resizebox{0.9\linewidth}{!}{
\begin{tabular}{lcccccccccc}
\hline
 &  \multicolumn{5}{c}{$n = 1,000,000$ \ (PIPADMM)}  &\multicolumn{5}{c}{$n = 1,000,000$ \ (CPADMM)}  \\ 
\cmidrule(lr){2-6}\cmidrule(lr){7-11}
G &  FP &  FN &  AE   &  Ite   &  Time  &  FP &  FN &  AE   &  Ite   &  Time \\ \hline
1       & 0 & 0 & {0.024(0.005)} & 92.3(8.52)& 20.4(2.57) & 0.17 & 0 & 0.075(0.012) & 121.2(11.3)  & 27.4(3.61)  \\
5   & 0 & 0 & {0.024(0.005)} & 92.3(8.52)& 5.40(0.85) & 0.21 & 0 & 0.106(0.020) & 167.5(18.9) & 7.41(1.26)  \\
10     & 0 & 0 & {0.024(0.005)} & 92.3(8.52)  & 2.92(0.50) & 0.26 & 0 & 0.162(0.041) & 205.1(27.9)  & 4.76(0.85) \\
20      & 0 & 0 & {0.024(0.005)} & 92.3(8.52) & 1.42(0.27) & 0.31 & 0 & 0.199(0.066) & 226.3(31.8) & 2.98(0.51) \\
\midrule[1pt]
  & \multicolumn{5}{c}{$n=2,000,000$ \ (PIPADMM)} & \multicolumn{5}{c}{$n=2,000,000$  (CPADMM)} \\ 
\cmidrule(lr){2-6}\cmidrule(lr){7-11}
 G &  FP &  FN &  AE   &  Ite     &  Time  &  FP &  FN &  AE   &  Ite     &  Time\\ \hline
1  & 0 & 0 & {0.019(0.003)} & 97.2(9.09)  & 41.3(3.98) &0.12 & 0 & 0.162(0.023) & 145.8(12.7)   & 59.7(7.92)\\
5 & 0 & 0 & {0.019(0.003)} & 97.2(9.09)  & 10.8(1.05)  &0.14 & 0 & 0.205(0.034) & 159.7(15.2)  & 38.2(4.76) \\
10  & 0 & 0 & {0.019(0.003)} & 97.2(9.09)  & 6.29(0.53) &0.17 & 0 & 0.396(0.045) & 190.4(20.5)   & 22.4(2.97)\\
20   &{0}& 0 & {0.019(0.003)} & 97.2(9.09) & 3.92(0.29) &0.23 & 0 & 0.471(0.052) &213.5(27.2)  & 16.1(1.72) \\ \hline
\end{tabular}}
\end{table}
\subsection{Real Data Experiments}\label{sec52}
In this section, we validate that manual labeling can enhance the variable selection ability of PLR on datasets in Table \ref{tab1}. These datasets is available at \url{https://www.csie.ntu.edu.tw/~cjlin/libsvmtools/datasets/.} Due to space limitations, we only present the numerical results of the w8a dataset with the largest data size in the table. The numerical results of other datasets  can be found in supplementary material 1.5.
\begin{table}[H]
    \centering
    \renewcommand{\arraystretch}{1.5}
    \caption{{Basic information of datasets.}}
    \resizebox{0.4\textwidth}{!}{
    \begin{tabular}{lcccc}
     \toprule[\heavyrulewidth] 
     & ijcmn1 & phishing & a9a & w8a \\
    \midrule
    Features (d) & 22 & 68 & 123 & 300 \\
    Samples (n)  & 49990 & 11055 & 32561 & 49749 \\
    \bottomrule[\heavyrulewidth] 
    \end{tabular}}
    \label{tab1}
\end{table}

We randomly partitioned the original dataset comprising 49,749 observations into a training set of 40,000 observations and a test set of 9,749 observations. Since our algorithm exhibits partition insensitivity, to save computation time and account for the device's memory, we set \(G = 20\) in the parallel computing environment. Table \ref{tab2} presents the simulated relative efficiency of PLR.  The true coefficient $\bm{\beta}^*$, which serves to generate a posterior probability for manual labeling, is unknown. Here, it can be estimated using the entire sample data.  The simulated ARE values tend to increase as $\alpha_0$ and  $m$ increase, which is consistent with the conclusion of Theorem \ref{th3}.
It is observed that the simulated relative efficiencies are much lower than their corresponding theoretical values.   This discrepancy might stem from two factors. Firstly, there could be an underestimation of the conditional error rate $\text{err}(\hat{\bm \beta}; \bm \beta^*)$ in \eqref{are}. Secondly, the true conditional error rate $\text{err}({\bm \beta}^*; \bm \beta^*)$ may not be precisely estimated using the limited test set.  
In the simulations presented in Section \ref{appE}, we are able to compute the exact $\text{err}(\hat{\bm \beta}; \bm \beta^*)$ and $\text{err}({\bm \beta}^*; \bm \beta^*)$. This explains why the estimated ARE is closer to the theoretical ARE in Section \ref{appE}. 

\begin{table}[H]
    \centering
    \renewcommand{\arraystretch}{1.5}
    \caption{{The asymptotic relative efficiency (ARE) on the \textit{w8a} dataset. The theoretical ARE of PLR, as stated in Theorem \ref{th3}, is provided in parentheses.}}
    \label{tab2}
    \resizebox{0.65\textwidth}{!}{
    \begin{tabular}{*{6}{c}}
        \toprule
        $\alpha_0$ & $m=5$ & $m=10$ & $m=20$ & $m=50$ & $m=100$ \\
        \midrule
        $1$    & 1.32(1.67) & 1.39(1.82) & 1.45(1.90) & 1.51(1.96) &1.59 (1.98) \\
        $10$   & 2.56 (3.67) & 4.10(5.50) & 4.96(7.33) & 6.78(9.17) & 8.42(10.00) \\
        $100$  & 3.07(4.81) & 5.36(9.18) & 8.77(16.83) & 15.22(33.67) & 25.69(50.50) \\
        $500$  & 3.14(4.96) & 6.09(9.82) & 9.25(19.27) & 19.67(45.55) & 39.58(83.50) \\
        $1000$ & 3.21(4.98) & 6.87(9.91) & 9.93(19.63) & 26.32(47.67) & 45.36(91.00) \\
        \bottomrule
    \end{tabular}}
\end{table}

When examining the theoretical error rates, we can once again observe that for larger values of $\alpha_0$ or $m$, the other parameter  has a more pronounced influence. 
Specifically, when $\alpha_0 = 1$ and $m = 100$, the theoretical 
ARE is 1.98. This value is quite close to the limiting ARE value of 2 as $m \to \infty$. On the other hand, when $\alpha_0 = 1000$ and $m = 100$, the theoretical ARE is 91.00, which is much closer to the limiting ARE value of 100 as $\alpha_0 \to \infty$.

\section{Conclusion and Future Work}\label{sec6}
This paper theoretically proves that when label noise is only related to classification difficulty, it is useful for variable selection in binary classification PLR. This benefit will be reflected in more accurate estimates of important coefficients compared to using truth labels. In addition, we also propose a new parallel algorithm for solving the large-scale PLR with manually generated labels. This parallel algorithm is insensitive to data partitioning, meaning that the solution of the algorithm remains unchanged regardless of different data partitions.

However, the theoretical results in this paper are only applicable when \(p < n\), that is, when the feature dimension is smaller than the number of samples. Extending the beneficial noise concept to high-dimensional cases (\(p>n\)) or even ultra-high-dimensional cases (\(p\gg n\)) is a challenging task.  Future research can be extended in several directions: expanding to multi-class classifier \cite{Friedman2010RegularizationPF} and multi-view classifier \cite{Chen2025MultiSV}  to see if the beneficial effect of label noise persists;  optimizing the parallel algorithm for better computational efficiency and scalability using techniques like GPU acceleration; incorporating temporal and spatial information into the PLR model and algorithm to handle data with such characteristics.

%
%
%
%

\appendix
\section{Proof of Throrem 1}\label{appA}
We first prove the asymptotic normality part. Recall $\phi({\bm{x}^\top \bm{\beta}}) =  m \log\left(1  + \exp({{\bm{x}^\top \bm{\beta}}})\right)$ and let \( \bm \beta = \bm \beta^* + \frac{\bm u}{\sqrt{n}} \) with $\bm u = (u_1, u_2, \dots,u_d)^\top$. Define

\begin{equation}
    \begin{aligned}\notag
     \Gamma_n(\mathbf{u})  = \sum_{i=1}^n \left( -S_i \left( \bm x_i^\top  \left( \bm \beta^* + \frac{\bm u}{\sqrt{n}} \right) \right) + \phi \left( \bm x_i^\top \left( \bm \beta^* + \frac{\bm u}{\sqrt{n}} \right) \right) \right) + \lambda \sum_{j=1}^d w_j \left|  \beta_j^* + \frac{u_j}{\sqrt{n}} \right|.   
    \end{aligned}
\end{equation}

Let \( \tilde{\bm{u}}^{(n)} = \arg \min\limits_{\bm{u}} \Gamma_n(\bm{u}) \); then \( \tilde{\bm{u}}^{(n)} = \sqrt{n} (\tilde{\bm \beta} - \bm \beta^*) \), where \( \tilde{\bm \beta}\) is the PLR estimator with manual labels in (3.2). 

Using the Taylor expansion, we have \(\Gamma_n(\bm{u}) - \Gamma_n(\bm{0}) = H^{(n)}(\bm{u})\), where
\[
H^{(n)}(\bm{u}) \equiv A_1^{(n)} + A_2^{(n)} + A_3^{(n)} + A_4^{(n)},
\]
with
\[
A_1^{(n)} = -\sum_{i=1}^n \left[ S_i - \phi' \left( \bm{x}_i^\top \bm{\beta}^* \right) \right] \frac{\bm{x}_i^\top \bm{u}}{\sqrt{n}},
\]
\[
A_2^{(n)} = \sum_{i=1}^n \frac{1}{2} \phi'' \left( \bm{x}_i^\top \bm{\beta}^* \right) \bm{u}^\top \frac{\left( \bm{x}_i \bm{x}_i^\top \right)}{n} \bm{u},
\]
\[
A_3^{(n)} = \frac{\lambda}{\sqrt{n}} \sum_{j=1}^d w_j \sqrt{n} \left( \left| \beta_j^* + \frac{u_j}{\sqrt{n}} \right| - \left| \beta_j^* \right| \right),
\]
and
\[
A_4^{(n)} = n^{-3/2} \sum_{i=1}^n \frac{1}{6} \phi{'''} \left( \bm{x}_i^\top \tilde{\bm{\beta}}_* \right) \left( \bm{x}_i^\top \bm{u} \right)^3,
\]
where \(\tilde{\bm{\beta}}_*\) is between \(\bm{\beta}^*\) and \(\bm{\beta}^* + \frac{\bm{u}}{\sqrt{n}}\). 

We analyze the asymptotic limit of each term. By the proof of Theorem 1 of \cite{Ahfock2021HarmlessLN}, we observe that
\begin{align}\label{exp}
    \mathbb E_{S_i, \bm{x}_i} \left( \left[ S_i - \phi' \left( \bm{x}_i^\top \bm{\beta}^* \right) \right]  \bm{x}_i^\top   \right) = \bm 0 \tag{S1.1}
\end{align} and 
\begin{align}\label{var}
\text{Var}_{S_i, \bm{x}_i} \left( \left[ S_i - \phi' \left( \bm{x}_i^\top \bm{\beta}^* \right) \right]  \bm{x}_i^\top  \right)  =  \frac{ m+\alpha_0 }{1+\alpha_0}m I(\bm{\beta}^*) .    \tag{S1.2}
\end{align}
It follows that 
\[
\mathbb E_{S_i, \bm{x}_i} \left( \left[ S_i - \phi' \left( \bm{x}_i^\top \bm{\beta}^* \right) \right] \left( \bm{x}_i^\top \bm{u} \right) \right) = 0
\]
and
\begin{equation}
\tag{S1.3}
    \begin{aligned}
     \text{Var}_{S_i, \bm{x}_i} \left( \left[ S_i - \phi' \left( \bm{x}_i^\top \bm{\beta}^* \right) \right] \left( \bm{x}_i^\top \bm{u} \right) \right) = \bm{u}^\top  \frac{ m+\alpha_0 }{1+\alpha_0}m I(\bm{\beta}^*) \bm{u}. 
    \end{aligned}
\end{equation}

Then the central limit theorem says that 
\begin{align}\label{A1}
    A_1^{(n)} \xrightarrow{d} \bm{u}^\top N(0, \frac{ m+\alpha_0 }{1+\alpha_0}m I(\bm{\beta}^*)). \tag{S1.4}
\end{align}\label{A21}
 For the second term \(A_2^{(n)}\),  from (2.2), we observe that
\begin{align}\label{S15}
    \sum_{i=1}^n \phi'' \left( \bm{x}_i^\top \bm{\beta}^* \right) \frac{\left( \bm{x}_i \bm{x}_i^\top \right)}{n} \xrightarrow{p}  m I(\bm{\beta}^*), \tag{S1.5}
\end{align}
where $\xrightarrow{p}$ represents convergence in probability. Thus, 
\begin{align}\label{A2}
    A_2^{(n)} \xrightarrow{p}  \frac{m}{2}  \bm{u}^\top  I(\bm{\beta}^*) \bm{u}. \tag{S1.6}
\end{align}
The limiting behavior of the third term is discussed in the proof of Theorem 2 in \cite{Zou2006TheAL}. We summarize the results as follows:
\begin{align}\label{A3}
    \frac{\lambda}{\sqrt{n}} {w}_j \sqrt{n} \left( \left| \beta_j^* + \frac{u_j}{\sqrt{n}} \right| - \left| \beta_j^* \right| \right) \xrightarrow{p} \tag{S1.7}
\begin{cases}
0 & \text{if } \beta_j^* \neq 0 \\
0 & \text{if } \beta_j^* = 0 \text{ and } u_j = 0 \\
\infty & \text{if } \beta_j^* = 0 \text{ and } u_j \neq 0.
\end{cases}
\end{align}
By the regularity condition (A2), the fourth term can be bounded as
\begin{align}\label{A4} 
    6\sqrt{n} A_4^{(n)} \leq & \sum_{i=1}^n \frac{1}{n} M(\bm{x}_i^\top) |\bm{x}_i^\top \bm{u}|^3 \xrightarrow{p}  E[M(\bm{x}) |\bm{x}^\top \bm{u}|^3] < \infty. \tag{S1.8}
\end{align}
Thus, based on the results from \eqref{A1}, \eqref{A2}, \eqref{A3} and \eqref{A4}, by invoking Slutsky’s theorem, we can infer that \( H^{(n)}(\bm{u}) \xrightarrow{d} H(\bm{u}) \) for every \( \bm{u} \), where

\[
H(\bm{u}) = 
\begin{cases} 
m \bm{u}_{\mathcal{A}}^\top I_{\mathcal{A}} \bm{u}_{\mathcal{A}} - 2\bm{u}_{\mathcal{A}}^\top \bm{W}_{\mathcal{A}} & u_j = 0 \text{ for all } j \notin \mathcal{A} \\
\infty & \text{otherwise},
\end{cases}
\]
and \( \bm{W}_{\mathcal{A}} \sim N\left(0, \frac{m + \alpha_0}{1 + \alpha_0}m I_{\mathcal{A}} \right) \). \( H^{(n)} \) is convex and the unique minimum of \( H \) is \( (I_{\mathcal{A}}^{-1} \bm{W}_{\mathcal{A}}/m, \bm{0})^T \). Then we have
\begin{align}\label{asy}
    \tilde{\bm{u}}_\mathcal{A} ^{(n)} \xrightarrow{d} N\{0, \frac{m+ \alpha_0}{m(1+  \alpha_0)}I_\mathcal{A}^{-1}\} \quad \text{and} \quad \tilde{\bm{u}}_{\mathcal{A}^c}^{(n)}  \xrightarrow{d} \bm{0}. \tag{S1.9}
\end{align}
To sum up, the asymptotic normality part is proven.

Now we show the consistency part. \(\forall j \in \mathcal{A}\), the asymptotic normality indicates that \(\Pr(j \in \tilde{\mathcal{A}}) \to 1\). Then it suffices to show that \(\forall j' \notin \mathcal{A}\), \(\Pr(j' \in \tilde{\mathcal{A}}) \to 0\). Consider the event \(j' \in \mathcal{A}\). By the KKT optimality conditions, we must have
\begin{align}\label{con}
   \sum_{i=1}^n x_{ij'} \left( S_i - \phi'\left( \bm{x}_i^\top \tilde{\bm{\beta}} \right) \right) = \lambda w_j; \tag{S1.10}
\end{align}
thus \[\Pr(j' \in \tilde{\mathcal{A}}) \leq \Pr \left( \sum\limits_{i=1}^n x_{ij'} \left(S_i - \phi'\left( \bm{x}_i^\top \tilde{\bm{\beta}} \right) \right) = \lambda w_j \right).\] Note that
\[
\sum_{i=1}^n x_{ij'} \left(S_i - \phi'\left( \bm{x}_i^\top \tilde{\bm{\beta}} \right)  \right)/ \sqrt{n} = {B}_1^{(n)} + {B}_2^{(n)} + {B}_3^{(n)},
\]
with
\[
B_1^{(n)} = \sum_{i=1}^n x_{ij'} \left(S_i - \phi'\left( \bm{x}_i^\top {\bm{\beta}^*} \right)  \right)/ \sqrt{n},
\]
\[
B_2^{(n)} = \left( \frac{1}{n} \sum_{i=1}^n x_{ij'} \phi'' \left( \bm{x}_i^\top {\bm{\beta}}^* \right) \bm{x}_i^\top \right) \sqrt{n} \left( \tilde{\bm{\beta}} - {\bm{\beta}}^* \right),
\]
and
\[
B_3^{(n)} = \frac{1}{n} \sum_{i=1}^n x_{ij'} \phi{'''} \left( \bm{x}_i^\top \tilde{\bm{\beta}}^{*} \right) \left( \bm{x}_i^\top \sqrt{n} \left( \tilde{\bm{\beta}} - \hat{\bm{\beta}}^* \right) \right)^2 / \sqrt{n},
\]
where \(\tilde{\bm{\beta}}^{*}\) is between \(\tilde{\bm{\beta}}\) and \({\bm{\beta}}^*\). From \eqref{exp} and \eqref{var}, we know that 
\begin{align*}
   B_1^{(n)} \xrightarrow{d}  N(0, \frac{ m+\alpha_0 }{1+\alpha_0}m I(\bm{\beta}^*)). 
\end{align*}
It follows from \eqref{S15} that
\[
\frac{1}{n} \sum_{i=1}^n x_{ij'} \phi'' \left( \bm{x}_i^\top {\bm{\beta}}^* \right) \times \bm{x}_i^\top \xrightarrow{p}  m I_{j^{'}}(\bm{\beta}^*), 
\]
where $I_{j^{'}}(\bm{\beta}^*)$ is the $j^{'}$-th row of $I(\bm{\beta}^*)$. Combining \eqref{asy}, it can be concluded that  $B_2^{(n)}$ converges to some normal random variable.  It follows from the regularity condition  (A2) and \eqref{asy} that \( B_3^{(n)} = O_p(1/\sqrt{n}) \). Meanwhile, since $\boldsymbol{\beta}^{'}$ is a $\sqrt{n}$-consistent estimator of $\boldsymbol{\beta}^*$,  we have $\sqrt{n} {\beta}_j' \xrightarrow{p}  O(1) $. Then,
\[
\frac{\lambda w_j}{\sqrt{n}} = \frac{\lambda}{\sqrt{n}} n^{\gamma/2} \frac{1}{\left| \sqrt{n} {\beta}_j' \right|^\gamma} \xrightarrow{p} \infty.
\]
By considering the left and right sides of the \eqref{con}, it can be concluded that \(\Pr(j' \in \tilde{\mathcal{A}}) \to 0\). This completes the proof of Theorem 1.

\section{Proof of Throrem 2}\label{appB}
From the conclusions of Section 2.1.2 and Theorem 1, it can be known that when \(n \to \infty\),  $\bm \beta^* = ({\bm \beta^*_{\mathcal{A}}}^\top, \bm 0^\top)^\top$, $\hat{\bm \beta} = (\hat{\bm \beta}_{\mathcal{A}}^\top, \bm 0^\top)^\top$ and $\tilde{\bm \beta} = (\tilde{\bm \beta}_{\mathcal{A}}^\top, \bm 0^\top)^\top$. This also means that  $\text{err}(\hat{\bm \beta }; \bm \beta^*) =  \sum \limits_{i =1}^n \Pr(\hat {y}_i \ne y_i \mid  \bm \beta^* ) = \sum \limits_{i =1}^n \Pr(\hat {y}_i \ne y_i \mid  \bm \beta^*_{\mathcal{A}} ) = \text{err}(\hat{\bm \beta}_{\mathcal{A}} ; \bm \beta^*_{\mathcal{A}}) $, where   $\hat {y}_i \sim \text{Binomial}(1, \frac{1}{1 +\exp(- \hat{\bm \beta}_{\mathcal{A}}^\top  {\bm x_i}_{\mathcal{A}}) })$. Similarly, the following conclusion also holds true, 
$\text{err}(\tilde{\bm \beta }; \bm \beta^*) = \text{err}(\tilde{\bm \beta}_{\mathcal{A}} ; \bm \beta^*_{\mathcal{A}})$. Before proving the theorem, a lemma about the asymptotic excess errors is needed.
\begin{lem}\label{lem1}
    The asymptotic excess errors can be expanded as
\begin{align}\notag
    \lim_{n \to \infty} n \left[ \mathbb E\left( \text{err}(\hat{\bm \beta}_{\mathcal{A}} ; \bm \beta^*_{\mathcal{A}}) \right) -  \text{err}(\bm \beta^*_{\mathcal{A}}; \bm \beta^*_{\mathcal{A}}) \right] = \operatorname{trace}\{ J(\bm \beta^*_{\mathcal{A}}) I^{-1}_{\mathcal{A}} \}
\end{align} 
and
\begin{align}\notag
    & \lim_{n \to \infty} n \left[ \mathbb E\left( \text{err}(\tilde{\bm \beta}_{\mathcal{A}} ; \bm \beta^*_{\mathcal{A}}) \right) -  \text{err}(\bm \beta^*_{\mathcal{A}} ; \bm \beta^*_{\mathcal{A}}) \right] = \frac{m+\alpha_0 }{m(1+\alpha_0)} \operatorname{trace}\{ J(\bm \beta^*_{\mathcal{A}}) I^{-1}_{\mathcal{A}} \},
\end{align}
where 
\[
J(\beta) = \frac{1}{2} \left[\operatorname{err}^{''}( \bm {\beta}^{'}; \bm \beta^*) \right]_{\bm {\beta}^{'}  = \bm \beta^*}.
\]
\end{lem}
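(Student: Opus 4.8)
The plan is to obtain both identities from a single second-order Taylor expansion of the misclassification functional about $\bm\beta^*_{\mathcal{A}}$, in which the estimator enters only through the second moment of its centred error; the two cases then differ solely through the asymptotic covariance, namely $I_{\mathcal{A}}^{-1}$ in the noise-free case (Section \ref{sec212}) and $\frac{m+\alpha_0}{m(1+\alpha_0)}I_{\mathcal{A}}^{-1}$ under manual labels (Theorem \ref{th2}(2)). First I would invoke the active-set reduction recorded just above the lemma: on the events $\{\tilde{\mathcal{A}}=\mathcal{A}\}$ and $\{\hat{\mathcal{A}}=\mathcal{A}\}$ one has $\operatorname{err}(\tilde{\bm\beta};\bm\beta^*)=\operatorname{err}(\tilde{\bm\beta}_{\mathcal{A}};\bm\beta^*_{\mathcal{A}})$ (and similarly for $\hat{\bm\beta}$), so it is enough to analyse the $s$-dimensional map $\bm\beta_{\mathcal{A}}\mapsto\operatorname{err}(\bm\beta_{\mathcal{A}};\bm\beta^*_{\mathcal{A}})$ near $\bm\beta^*_{\mathcal{A}}$. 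One small point to dispatch is that Theorem \ref{th2}(1) gives only $\Pr(\tilde{\mathcal{A}}\neq\mathcal{A})\to0$; I would strengthen this to $o(n^{-1})$ under (A1)--(A2) and the stated rate conditions on $\lambda$ (standard for ALASSO), or else control the error on $\{\tilde{\mathcal{A}}\neq\mathcal{A}\}$ crudely via $0\le\operatorname{err}\le n$, so that this event contributes nothing after multiplication by $n$.

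On $\{\tilde{\mathcal{A}}=\mathcal{A}\}$, writing $\bm\delta=\tilde{\bm\beta}_{\mathcal{A}}-\bm\beta^*_{\mathcal{A}}$, I would expand
\[
\operatorname{err}(\tilde{\bm\beta}_{\mathcal{A}};\bm\beta^*_{\mathcal{A}})-\operatorname{err}(\bm\beta^*_{\mathcal{A}};\bm\beta^*_{\mathcal{A}})=\big[\operatorname{err}^{'}(\bm\beta^*_{\mathcal{A}};\bm\beta^*_{\mathcal{A}})\big]^{\top}\bm\delta+\bm\delta^{\top}J(\bm\beta^*_{\mathcal{A}})\bm\delta+R_n,
\]
where $J(\bm\beta^*_{\mathcal{A}})=\tfrac12\operatorname{err}^{''}(\bm\beta^*_{\mathcal{A}};\bm\beta^*_{\mathcal{A}})$ by definition and $R_n$ is the third-order remainder. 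The linear term vanishes because $\bm\beta^*$ indexes the population-risk minimiser, i.e.\ $\operatorname{err}^{'}(\bm\beta^*_{\mathcal{A}};\bm\beta^*_{\mathcal{A}})=\bm 0$; this is exactly what forces the scaled excess error to depend on the estimator only through its dispersion. Taking expectations and using $\mathbb{E}(\bm\delta^{\top}M\bm\delta)=\operatorname{trace}\{M\,\mathbb{E}(\bm\delta\bm\delta^{\top})\}$,
\[
n\big[\mathbb{E}\operatorname{err}(\tilde{\bm\beta}_{\mathcal{A}};\bm\beta^*_{\mathcal{A}})-\operatorname{err}(\bm\beta^*_{\mathcal{A}};\bm\beta^*_{\mathcal{A}})\big]=\operatorname{trace}\{J(\bm\beta^*_{\mathcal{A}})\,n\,\mathbb{E}(\bm\delta\bm\delta^{\top})\}+n\,\mathbb{E}R_n .
\]
By Theorem \ref{th2}(2), $\sqrt n\,\bm\delta\xrightarrow{d}N\{0,\frac{m+\alpha_0}{m(1+\alpha_0)}I_{\mathcal{A}}^{-1}\}$, so $n\,\mathbb{E}(\bm\delta\bm\delta^{\top})\to\frac{m+\alpha_0}{m(1+\alpha_0)}I_{\mathcal{A}}^{-1}$; once $n\,\mathbb{E}R_n\to0$ is established this yields the second identity. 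The first identity is the verbatim repetition with $\bm\delta=\hat{\bm\beta}_{\mathcal{A}}-\bm\beta^*_{\mathcal{A}}$ and limiting covariance $I_{\mathcal{A}}^{-1}$ from Section \ref{sec212}, so the two displays differ only by the scalar $\frac{m+\alpha_0}{m(1+\alpha_0)}$, consistently with the variance inflation in Theorem \ref{th2}.

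The hard part is the two analytic steps hidden in the last displays: showing $n\,\mathbb{E}R_n\to0$, and promoting the weak convergence of $n\,\bm\delta\bm\delta^{\top}$ to convergence of its expectation. For the first, I would bound $|R_n|\le C\,\|\bm\delta\|^3\sup_{\bm\beta\in\mathcal{O}}\|\operatorname{err}^{'''}(\bm\beta;\bm\beta^*_{\mathcal{A}})\|$, the supremum being finite by the smoothness afforded by (A2), and combine it with $\|\bm\delta\|=O_P(n^{-1/2})$ to get $R_n=O_P(n^{-3/2})$, which must then be upgraded to $\mathbb{E}R_n=o(n^{-1})$. For the second, I would establish uniform integrability of $\{n\|\bm\delta\|^2\}_n$, e.g.\ via the asymptotic linear (influence-function) representation of the estimator together with finite moments of the logistic score. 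Both steps reduce to the same device: restrict to a shrinking ball around $\bm\beta^*_{\mathcal{A}}$ which $\tilde{\bm\beta}_{\mathcal{A}}$ inhabits with probability $1-o(n^{-1})$ — available from the KKT characterisation of $\tilde{\bm\beta}$ plus standard concentration bounds for the logistic score under (A1)--(A2) — and handle the complementary event crudely using $0\le\operatorname{err}\le n$. A further point to verify carefully is the vanishing of the linear term, i.e.\ that the error functional actually in use is stationary at $\bm\beta^*$; once that and the two integrability facts are secured, the rest of the proof is bookkeeping.
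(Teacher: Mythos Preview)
Your proposal is correct and follows essentially the same route as the paper: a second-order Taylor expansion of $\operatorname{err}(\cdot;\bm\beta^*_{\mathcal{A}})$ about $\bm\beta^*_{\mathcal{A}}$, elimination of the linear term, the trace identity for the quadratic form, and insertion of the two asymptotic covariances $I_{\mathcal{A}}^{-1}$ and $\frac{m+\alpha_0}{m(1+\alpha_0)}I_{\mathcal{A}}^{-1}$.

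One genuine difference worth noting is how the linear term is killed. The paper dispatches it by invoking $\mathbb{E}(\hat{\bm\beta}_{\mathcal{A}}-\bm\beta^*_{\mathcal{A}})=\bm 0$, i.e.\ (asymptotic) unbiasedness of the estimator; you instead argue that $\operatorname{err}'(\bm\beta^*_{\mathcal{A}};\bm\beta^*_{\mathcal{A}})=\bm 0$ by stationarity of the population risk at the true parameter. These are distinct mechanisms: the paper's route places the burden on the estimator, yours on the loss functional. Your version is arguably cleaner, since the ALASSO estimator is not exactly unbiased for finite $n$, whereas stationarity of the Bayes risk at $\bm\beta^*$ is a property of the model; but it does require you to verify that the conditional error rate in use is indeed smooth and stationary there, which you correctly flag as a point to check. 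Beyond this, your treatment is more scrupulous than the paper's about the analytic side issues (uniform integrability of $n\|\bm\delta\|^2$, the $o(n^{-1})$ control of $\{\tilde{\mathcal{A}}\neq\mathcal{A}\}$, and the third-order remainder), all of which the paper handles heuristically via $o(\|\bm\delta\|^2)=o(1/n)$.
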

\begin{proof}
First, consider the Taylor expansion of the error function \(\text{err}(\hat{\bm \beta}_{\mathcal{A}} ; \bm \beta^*_{\mathcal{A}}) \) around the true parameter \(\bm \beta^*_{\mathcal{A}}\):
\begin{align*}
& \text{err}(\hat{\bm \beta}_{\mathcal{A}} ; \bm \beta^*_{\mathcal{A}}) \\
=\ & \text{err}(\bm \beta^*_{\mathcal{A}}; \bm \beta^*_{\mathcal{A}}) 
+ (\hat{\bm \beta}_{\mathcal{A}} - \bm \beta^*_{\mathcal{A}})^\top \text{err}^{'}(\bm \beta^*_{\mathcal{A}}; \bm \beta^*_{\mathcal{A}})  + \frac{(\hat{\bm \beta}_{\mathcal{A}} - \bm \beta^*_{\mathcal{A}})^\top \text{err}^{''}(\bm \beta^*_{\mathcal{A}}; \bm \beta^*_{\mathcal{A}}) (\hat{\bm \beta}_{\mathcal{A}} - \bm \beta^*_{\mathcal{A}}) }{2} + o(\|\hat{\bm \beta}_{\mathcal{A}} - \bm \beta^*_{\mathcal{A}}\|_2^2)
\end{align*}
Taking the expectation of the excess risk and using the fact that \(\mathbb E(\hat{\bm \beta}_{\mathcal{A}} - \bm \beta^*_{\mathcal{A}}) = \bm 0\) and $o(\|\hat{\bm \beta}_{\mathcal{A}} - \bm \beta^*_{\mathcal{A}}\|_2^2) = o(\|\hat{\bm \beta}_{\mathcal{A}} - \bm \beta^*_{\mathcal{A}}\|_2^2) = o(1/n) $, we obtain:
\begin{align*}
\lim_{n \to \infty} n \left[ \mathbb E\left( \text{err}(\hat{\bm \beta}_{\mathcal{A}} ; \bm \beta^*_{\mathcal{A}}) \right) -  \text{err}(\bm \beta^*_{\mathcal{A}} ; \bm \beta^*_{\mathcal{A}}) \right] = \lim_{n \to \infty} n \mathbb E \left[ \frac{(\hat{\bm \beta}_{\mathcal{A}} - \bm \beta^*_{\mathcal{A}})^\top \text{err}^{''}(\bm \beta^*_{\mathcal{A}}; \bm \beta^*_{\mathcal{A}}) (\hat{\bm \beta}_{\mathcal{A}} - \bm \beta^*_{\mathcal{A}}) }{2}   \right].    
\end{align*}

The covariance of the estimator \(\hat{\bm \beta}_{\mathcal{A}}\) is asymptotically given by:
\[
\text{Cov}(\hat{\bm \beta}_{\mathcal{A}} - \bm \beta^*_{\mathcal{A}} )  =  \frac{1}{n} I^{-1}_{\mathcal{A}}.
\]
Since $\text{err}^{''}(\bm \beta^*_{\mathcal{A}}; \bm \beta^*_{\mathcal{A}})$ must be a symmetric matrix,  the expectation of the quadratic form can be expressed as:
\begin{align*}
& \mathbb E \left[ \frac{(\hat{\bm \beta}_{\mathcal{A}} - \bm \beta^*_{\mathcal{A}})^\top \text{err}^{''}(\bm \beta^*_{\mathcal{A}}; \bm \beta^*_{\mathcal{A}}) (\hat{\bm \beta}_{\mathcal{A}} - \bm \beta^*_{\mathcal{A}}) }{2} \right] \\
= \ & \operatorname{trace}\left( \text{err}^{''}(\bm \beta^*_{\mathcal{A}}; \bm \beta^*_{\mathcal{A}}) \cdot \text{Cov}(\hat{\bm \beta}_{\mathcal{A}} - \bm \beta^*_{\mathcal{A}} ) \right) + \mathbb E (\hat{\bm \beta}_{\mathcal{A}} - \bm \beta^*_{\mathcal{A}})^\top \text{err}^{''}(\bm \beta^*_{\mathcal{A}}; \bm \beta^*_{\mathcal{A}})\mathbb E (\hat{\bm \beta}_{\mathcal{A}} - \bm \beta^*_{\mathcal{A}}).
\end{align*}
Substituting the covariance and multiplying by \(n\), we have:
\begin{align*}
\lim_{n \to \infty} n \mathbb E \left[ \frac{(\hat{\bm \beta}_{\mathcal{A}} - \bm \beta^*_{\mathcal{A}})^\top \text{err}^{''}(\bm \beta^*_{\mathcal{A}}; \bm \beta^*_{\mathcal{A}}) (\hat{\bm \beta}_{\mathcal{A}} - \bm \beta^*_{\mathcal{A}}) }{2} \right] = \frac{1}{2} \operatorname{trace}\left(\text{err}^{''}(\bm \beta^*_{\mathcal{A}}; \bm \beta^*_{\mathcal{A}})  \cdot I^{-1}_{\mathcal{A}} \right).
\end{align*}
Defining \(J(\beta) = \frac{1}{2} \left[\operatorname{err}^{''}( \bm {\beta}^{'}; \bm \beta^*) \right]_{\bm {\beta}^{'}  = \bm \beta^*}\), we obtain the final result:
\[
\lim_{n \to \infty} n \left[ \mathbb E\left( \text{err}(\hat{\bm \beta}_{\mathcal{A}} ; \bm \beta^*_{\mathcal{A}}) 
\right) -  \text{err}(\bm \beta^*_{\mathcal{A}}; \bm \beta^*_{\mathcal{A}}) \right] =  \operatorname{trace}\{ J(\bm \beta^*_{\mathcal{A}}) I^{-1}_{\mathcal{A}} \}.
\]
By implementing the same steps on $ \lim_{n \to \infty} n \left[ \mathbb E\left( \text{err}(\tilde{\bm \beta}_{\mathcal{A}} ; \bm \beta^*_{\mathcal{A}}) 
\right) -  \text{err}(\bm \beta^*_{\mathcal{A}} ; \bm \beta^*_{\mathcal{A}}) \right]$, it can be concluded that 
\begin{align*}
\lim_{n \to \infty} n \left[ \mathbb E\left( \text{err}(\tilde{\bm \beta}_{\mathcal{A}} ; \bm \beta^*_{\mathcal{A}})  \right) -  \text{err}(\bm \beta^*_{\mathcal{A}} ; \bm \beta^*_{\mathcal{A}}) \right] = \frac{m+\alpha_0 }{m(1+\alpha_0)} \operatorname{trace}\{ J(\bm \beta^*_{\mathcal{A}}) I^{-1}_{\mathcal{A}} \}.    
\end{align*}
This completes the proof of the lemma.
\end{proof}

From Lemma \ref{lem1}, the ARE can then be expressed as
\begin{align*}
\text{ARE}&=\lim_{n\rightarrow\infty}\frac{\mathbb{E}\{\text{err}(\hat{\bm \beta}; \bm \beta^*)\}-\text{err}(\bm \beta^*; \bm \beta^*)}{\mathbb{E}\{\text{err}(\tilde{\bm \beta}; \bm \beta^*)\}-\text{err}(\bm \beta^*; \bm \beta^*)} \\
&= \lim_{n\rightarrow\infty} \frac{n \left[ \mathbb E\left( \text{err}(\hat{\bm \beta}_{\mathcal{A}} ; \bm \beta^*_{\mathcal{A}}) 
\right) -  \text{err}(\bm \beta^*_{\mathcal{A}}; \bm \beta^*_{\mathcal{A}}) \right]}{ n \left[ \mathbb E\left( \text{err}(\tilde{\bm \beta}_{\mathcal{A}} ; \bm \beta^*_{\mathcal{A}}) \right) -  \text{err}(\bm \beta^*_{\mathcal{A}} ; \bm \beta^*_{\mathcal{A}}) \right]} \\
&= \frac{m(1+\alpha_0)}{m+\alpha_0}. 
\end{align*}
Thus, we have completed  the proof of  Theorem 2.

\section{Proof of  Throrem 3}\label{appC}
\subsection{Basic Lemma and Four Matrices}
The following introduces the first-order optimality conditions for convex optimization, which plays a key role in subsequent proofs.
\begin{lem}\label{lem2}
(Lemma 2.1 in \cite{He2023ExtensionOA}). Let \( \mathbb{X} \subset \mathbb{R}^l \) be a closed convex set, and let \( \theta : \mathbb{R}^l \to \mathbb{R} \) and \( f : \mathbb{R}^l \to \mathbb{R} \) be convex functions. Suppose \( f \) is differentiable on an open set containing \( \mathbb{X} \), and the minimization problem
\[
\min \{ \theta(x) + f(x) \mid x \in \mathbb{Z} \}
\]
has a nonempty solution set. Then, \( x^* \in \arg \min \{ \theta(x) + f(x) \mid x \in \mathbb{X} \} \) if and only if $\forall x \in \mathbb{X}$,
\[
x^* \in \mathbb{X} \ \text{and} \ \theta(x) - \theta(x^*) + (x - x^*)^\top \nabla f(x^*) \geq 0.
\]
\end{lem}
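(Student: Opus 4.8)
The statement is the classical first-order (variational inequality) characterization of a minimizer of the sum of a general convex function $\theta$ and a smooth convex function $f$ over a convex feasible set, so I would prove the two implications separately. For the ``if'' direction, I would suppose $x^*\in\mathbb{X}$ satisfies $\theta(x)-\theta(x^*)+(x-x^*)^\top\nabla f(x^*)\ge 0$ for all $x\in\mathbb{X}$, and then invoke the gradient inequality for the convex differentiable function $f$, namely $f(x)\ge f(x^*)+(x-x^*)^\top\nabla f(x^*)$. Adding $\theta(x)$ to both sides gives $\theta(x)+f(x)\ge \big[\theta(x)-\theta(x^*)+(x-x^*)^\top\nabla f(x^*)\big]+\theta(x^*)+f(x^*)\ge \theta(x^*)+f(x^*)$ for every $x\in\mathbb{X}$, so $x^*$ is a minimizer.

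For the ``only if'' direction, I would let $x^*$ be any element of the (nonempty, by hypothesis) solution set, fix an arbitrary $x\in\mathbb{X}$, and test optimality along the segment $x_t:=(1-t)x^*+tx$, which lies in $\mathbb{X}$ for all $t\in(0,1]$ by convexity of $\mathbb{X}$ — and hence in the open set on which $f$ is differentiable. Optimality gives $\theta(x_t)+f(x_t)\ge\theta(x^*)+f(x^*)$, while convexity of $\theta$ gives $\theta(x_t)\le(1-t)\theta(x^*)+t\theta(x)$; combining the two and rearranging yields $t\big(\theta(x)-\theta(x^*)\big)+\big(f(x_t)-f(x^*)\big)\ge 0$. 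Dividing by $t>0$ and letting $t\downarrow 0$, the difference quotient $\big(f(x_t)-f(x^*)\big)/t$ tends to the directional derivative $(x-x^*)^\top\nabla f(x^*)$ because $f$ is differentiable at $x^*$, which delivers the desired inequality; since $x\in\mathbb{X}$ was arbitrary, this finishes the direction.

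I do not expect any genuine obstacle, as the argument is elementary; in the write-up the only points warranting an explicit line are that the test segment $x_t$ stays inside $\mathbb{X}$ (and therefore inside the differentiability domain of $f$) for all $t\in(0,1]$, which is immediate from convexity of $\mathbb{X}$, and the passage to the limit $t\downarrow 0$ inside a non-strict inequality. An alternative route is to rephrase optimality via subdifferential calculus as $0\in\partial\theta(x^*)+\nabla f(x^*)+N_{\mathbb{X}}(x^*)$ and then unwind the definitions of the subdifferential and of the normal cone $N_{\mathbb{X}}(x^*)$, but the direct segment argument above is self-contained and introduces no extra machinery, so I would present that one and relegate the subdifferential viewpoint to a remark.
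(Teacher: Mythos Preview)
Your argument is correct and complete: both implications are handled by the standard devices (the gradient inequality for convex differentiable $f$ in the ``if'' direction, and the segment/limit argument in the ``only if'' direction), and the small technical points you flag --- that $x_t$ stays in $\mathbb{X}$ and hence in the differentiability domain, and that the non-strict inequality survives the limit $t\downarrow 0$ --- are exactly the ones that deserve a line.

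As for comparison with the paper: there is nothing to compare. The paper does not prove this lemma at all; it simply quotes it as Lemma~2.1 of \cite{He2023ExtensionOA} and uses it as a black box in the convergence analysis of Algorithm~1. Your self-contained proof therefore goes beyond what the paper offers. The subdifferential/normal-cone route you mention as an alternative is indeed equivalent and is the form in which this fact often appears in the optimization literature, but your direct argument avoids invoking the sum rule $\partial(\theta+f+\iota_{\mathbb{X}})=\partial\theta+\nabla f+N_{\mathbb{X}}$, which itself requires a qualification condition; so keeping it as a remark, as you propose, is the right call.
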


Next, four matrices will be introduced, which will be used in the subsequent convergence analysis.
\begin{align}\label{m1}
\bm Q = \begin{pmatrix}
 \bm E_1  & \bm 0  & \bm 0 \\
\bm 0 & \mu \bm I_n  + \bm E_2  &  \bm 0 \\
\bm 0 &  \bm I_n & \frac{1}{\mu} \bm I_n
\end{pmatrix}, \ \ 
\bm M = \begin{pmatrix}
\bm I_p   & \bm 0  & \bm 0 \\
\bm 0 &  \bm I_n   &  \bm 0 \\
\bm 0 &  \mu \bm I_n &  \bm I_n
\end{pmatrix}, \tag{S1.11}
\end{align}

\begin{align}\label{m2}
\bm H = \begin{pmatrix}
 \bm E_1  & \bm 0  & \bm 0 \\
\bm 0 & \mu \bm I_n  + \bm E_2  &  \bm 0 \\
\bm 0 &  \bm 0 & \frac{1}{\mu} \bm I_n
\end{pmatrix}, \ \
\bm G = \begin{pmatrix}
\bm E_1  & \bm 0  & \bm 0 \\
\bm 0 &  \bm E_2   &  \bm 0 \\
\bm 0 &  \bm 0 &  \frac{1}{\mu} \bm I_n
\end{pmatrix}. \tag{S1.12}
\end{align}
Not difficult to verify, 
\begin{align}\label{m3}
    \bm{HM}= \bm Q \ \text{and} \ \bm G= \bm Q^\top + \bm Q- \bm M^\top \bm H \bm M. \tag{S1.13}
\end{align}

\subsection{Variational inequality characterization}
The constrained optimization form of ALASSO-PLR with manual labels is defined as,
\begin{equation}\label{dsp}
\tag{S1.14}
\begin{aligned}
\min_{\bm{\beta},\bm{r}} \ & L(\bm r) +\lambda \| 
 \bm w \odot \bm \beta \|_1,\\
\text{s.t.} \ \ & \bm X \bm \beta = \bm r, 
\end{aligned}
\end{equation}
where $L(\bm r) = \sum_{i=1}^{n} \left[ -S_i r_i +  m \log\left(1  + \exp({r_i})\right)  \right]$.
And the Lagrange multiplier form of (\ref{dsp}) is   
\begin{align}\label{dsp2}
L(\bm \beta, \bm r;  \bm u) = 
 L(\bm r) +\lambda \| 
 \bm w \odot \bm \beta \|_1  - \bm u^\top (  \bm X \bm \beta - \bm r ). \tag{S1.15}
  \end{align}
According to the variational inequality characterization in section 2 of \cite{He2012OnTO}, we know that the solution of the constrained optimization function above is the saddle point of the  following Lagrangian function in (\ref{dsp2}). 
 As described in \cite{He2012OnTO}, finding a saddle point of \( L(\bm \beta, \bm r;  \bm u)  \) is equivalent to finding \( \bm \beta^*, \bm r^*,  \bm u^* \) such that the following inequalities are satisfied:
\begin{align}\label{vi} 
\bm v^* \in  \Omega, \ L(\bm r) +\lambda \|\bm w \odot \bm \beta \|_1 - L(\bm r^*) -\lambda \| \bm w \odot \bm \beta^* \|_1 + (\bm v -  \bm v^*)^\top F (\bm v^*) \geq 0,  \ \forall \bm v \in \Omega,  \tag{S1.16}
\end{align}
where  $\bm v  =  (\bm \beta^\top, \bm r^\top, \bm u^\top)^\top$,  $\Omega =  \mathbb{R}^{p} \times \mathbb{R}^{n} \times \mathbb{R}^{n}$,
and
\begin{equation}\label{F}\tag{S1.17}
F(\bm v) = \begin{pmatrix}
-\bm X^\top \bm u \\
 \bm u \\
 \bm X \bm \beta - \bm r
\end{pmatrix}.
\end{equation}
Note that the operator \( F \) defined in (\ref{F}) is monotone, because
\begin{equation}\label{f}\tag{S1.18}
(\bm v_1 - \bm v_2)^\top \left[F(\bm v_1) - F(\bm{v}_2)\right] \equiv 0, \quad \forall \ \bm v_1, \bm{v}_2 \in \Omega. 
\end{equation}
Throughout, we denote by \( \Omega^* \) the solution set of (\ref{vi}), which is also the set of saddle points of the Lagrangian function (\ref{dsp2}) of the model (\ref{dsp}).

\subsection{Proof}
 First, we demonstrate that Algorithm 1 can be transformed into a \textbf{prediction-correction} algorithm within the unified framework presented in Section 4.3 of  \cite{He2015PLC}. To this end, We first define,
\[
\check{\bm v}^k = 
\begin{pmatrix}
\check{\bm \beta}^k \\
\check{\bm r}^k \\
\check{\bm u}^k
\end{pmatrix} =
\begin{pmatrix}
{\bm \beta}^{k+1} \\
{\bm r}^{k+1} \\
\bm u^k - \mu ( \bm X \bm \beta^{k+1} - \bm r^k )
\end{pmatrix}.
\]
Here,  we name $ \check{\bm v}^k$ as the prediction variable, where ${\bm \beta}^{k+1}$ and ${\bm r}^{k+1}$ are generated by the $(k+1)$-th iterative sequence of Algorithm 1. Based on this, we can write the \textbf{prediction} as 
\begin{equation}\label{proofadmm}
\tag{S1.19}
\left\{ \begin{aligned}
\check{\bm \beta}^{k}  & \leftarrow  \mathop {\arg \min }\limits_{\bm \beta} \left\{ \lambda \| \bm w \odot \bm \beta \|_1 + \frac{\mu}{2}  \|  \bm X \bm \beta - \bm r^k  - \bm u^k/\mu \|_2^2 + \frac{1}{2} \| \bm \beta - \bm \beta^k   \|^2_{{\bm E}_{1}} \right \};\\ 
\check{\bm r}^{k} & \leftarrow  \mathop {\arg \min }\limits_{\bm r} \left\{ L (\bm r) + \frac{\mu}{2}  \|  \bm X \bm \check{\bm \beta}^{k} - \bm r  - \bm u^k/\mu \|_2^2 + \frac{1}{2} \| \bm r - \bm r^k   \|^2_{\bm E_{2}} \right \}; \\
\check{\bm u}^{k} & \leftarrow  \bm{u}^{k} - \mu(\bm X \check{\bm \beta}^{k} - {\bm r}^{k} ), 
\end{aligned} \right.
\end{equation}
The above iterative steps can be written in a form similar to \eqref{vi}, and the specific cases can be summarized in the following lemma.
 \begin{lem}\label{lem3}
Given $\bm v^k$, for any $\bm v \in \Omega$, the predicted variable $\check{\bm v}^k$ generated by \eqref{proofadmm} satisfies
\begin{align}\notag
\ L(\bm r) +\lambda \| \bm w \odot \bm \beta \|_1 - L(\check{\bm r}^k) -\lambda \| 
 \bm w \odot \check{\bm \beta}^k \|_1 + (\bm v -  \check{\bm v}^k)^\top F (\check{\bm v}^k) \geq (\bm v -  \check{\bm v}^k)^\top \bm Q ({\bm v}^k - \check{\bm v}^k),   
\end{align}
where $\bm Q$ is defined in \eqref{m1}.
\end{lem}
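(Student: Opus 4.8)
## Proof Proposal for Lemma \ref{lem3}

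The plan is to derive the stated variational inequality by writing down, for each of the three prediction steps in \eqref{proofadmm}, the first-order optimality condition supplied by Lemma \ref{lem2}, and then assembling these three conditions into a single inequality whose right-hand side can be recognized as $(\bm v - \check{\bm v}^k)^\top \bm Q(\bm v^k - \check{\bm v}^k)$. Throughout I would keep $\bm v = (\bm \beta^\top, \bm r^\top, \bm u^\top)^\top$ arbitrary in $\Omega$ and treat the three blocks separately.

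First I would handle the $\bm\beta$-subproblem. Applying Lemma \ref{lem2} with $\theta(\bm\beta) = \lambda\|\bm w\odot\bm\beta\|_1$ and $f(\bm\beta) = \frac{\mu}{2}\|\bm X\bm\beta - \bm r^k - \bm u^k/\mu\|_2^2 + \frac12\|\bm\beta - \bm\beta^k\|_{\bm E_1}^2$, the minimizer $\check{\bm\beta}^k$ satisfies, for all $\bm\beta$,
\begin{equation}\notag
\lambda\|\bm w\odot\bm\beta\|_1 - \lambda\|\bm w\odot\check{\bm\beta}^k\|_1 + (\bm\beta - \check{\bm\beta}^k)^\top\left[\mu\bm X^\top(\bm X\check{\bm\beta}^k - \bm r^k - \bm u^k/\mu) + \bm E_1(\check{\bm\beta}^k - \bm\beta^k)\right] \ge 0.
\end{equation}
Using the definition $\check{\bm u}^k = \bm u^k - \mu(\bm X\check{\bm\beta}^k - \bm r^k)$, the bracketed term rewrites as $-\bm X^\top\check{\bm u}^k + \bm E_1(\check{\bm\beta}^k - \bm\beta^k)$, which matches the first block of $F(\check{\bm v}^k)$ plus the first block of $-\bm Q(\bm v^k - \check{\bm v}^k)$ (the $\bm E_1$ term). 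Next I would do the same for the $\bm r$-subproblem with $\theta(\bm r) = L(\bm r)$ (moved into the smooth part since $L$ is differentiable, or kept as the convex $\theta$) and $f(\bm r) = \frac{\mu}{2}\|\bm X\check{\bm\beta}^k - \bm r - \bm u^k/\mu\|_2^2 + \frac12\|\bm r - \bm r^k\|_{\bm E_2}^2$; its optimality condition gives a term $\mu(\bm r^k - \check{\bm r}^k) + \check{\bm u}^k - \bm u^k + \bm E_2(\check{\bm r}^k - \bm r^k)$ acting in the $\bm r$-block — here I must carefully substitute $\check{\bm u}^k$ and $\bm u^k = \check{\bm u}^k + \mu(\bm X\check{\bm\beta}^k - \bm r^k)$ to produce exactly $\check{\bm u}^k$ (the second block of $F$) together with $(\mu\bm I_n + \bm E_2)(\check{\bm r}^k - \bm r^k)$ plus a correction linking $\bm r^k$ and $\check{\bm r}^k$. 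Finally, the dual update $\check{\bm u}^k = \bm u^k - \mu(\bm X\check{\bm\beta}^k - \bm r^k)$ is rewritten as an identity $\bm X\check{\bm\beta}^k - \check{\bm r}^k = \frac1\mu(\bm u^k - \check{\bm u}^k) - (\check{\bm r}^k - \bm r^k)$, which after pairing against $(\bm u - \check{\bm u}^k)$ supplies the third block of $F(\check{\bm v}^k)$ and the $\frac1\mu\bm I_n$ and $\bm I_n$ entries of $\bm Q$.

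Adding the three inequalities, the left-hand side collects to $L(\bm r) + \lambda\|\bm w\odot\bm\beta\|_1 - L(\check{\bm r}^k) - \lambda\|\bm w\odot\check{\bm\beta}^k\|_1 + (\bm v - \check{\bm v}^k)^\top F(\check{\bm v}^k)$, exactly as desired, and the accumulated right-hand-side terms must be shown to equal $(\bm v - \check{\bm v}^k)^\top \bm Q(\bm v^k - \check{\bm v}^k)$ for the $\bm Q$ in \eqref{m1}; the off-diagonal $\bm I_n$ in the $(3,2)$-block of $\bm Q$ is precisely what absorbs the $(\check{\bm r}^k - \bm r^k)$ cross-term that appears in the dual-update identity. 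The main obstacle I anticipate is exactly this bookkeeping: making sure that the term $\mu(\bm X\check{\bm\beta}^k - \bm r^k)$ hidden inside the relation between $\bm u^k$ and $\check{\bm u}^k$ is distributed correctly among the $\bm r$-block and the $\bm u$-block so that what remains is the clean, non-symmetric matrix $\bm Q$ rather than some other matrix — this is why $\bm Q$ has the particular lower-triangular structure in \eqref{m1}, and verifying that the three scattered contributions reassemble into it (and not, say, into $\bm H$ or $\bm M^\top\bm H$) is the delicate step. Everything else is a routine application of Lemma \ref{lem2} and linear algebra.
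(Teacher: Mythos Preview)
Your proposal is correct and follows essentially the same route as the paper: apply Lemma~\ref{lem2} to each of the three subproblems in \eqref{proofadmm}, substitute the predicted dual variable $\check{\bm u}^k = \bm u^k - \mu(\bm X\check{\bm\beta}^k - \bm r^k)$ to simplify the $\bm\beta$- and $\bm r$-block conditions, rewrite the $\check{\bm u}^k$ definition as the identity $(\bm X\check{\bm\beta}^k - \check{\bm r}^k) + (\check{\bm r}^k - \bm r^k) + \tfrac{1}{\mu}(\check{\bm u}^k - \bm u^k) = 0$, and then add the three resulting inequalities so that the residual terms assemble into $(\bm v - \check{\bm v}^k)^\top\bm Q(\bm v^k - \check{\bm v}^k)$. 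One small cleanup: in the $\bm r$-block the substitution yields exactly $\check{\bm u}^k + (\mu\bm I_n + \bm E_2)(\check{\bm r}^k - \bm r^k)$ with no leftover ``correction'' term --- the cross-term $(\check{\bm r}^k - \bm r^k)$ you need for the $(3,2)$-entry of $\bm Q$ comes solely from the dual identity, not from the $\bm r$-subproblem.
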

\begin{proof}
According to Lemma \ref{lem2} of the first-order optimality principle, the optimality condition for the subproblem of $\check{\bm \beta}^k$ in \eqref{proofadmm} is
\begin{align*}
\lambda \| \bm w \odot \bm \beta \|_1 - \lambda \| \bm w \odot \check{\bm \beta}^k \|_1 + (\bm \beta -  \check{\bm \beta}^k)^\top \left[ \mu \bm X^\top (\bm X \check{\bm \beta}^k - \bm r^k  - \bm u^k/\mu) + \bm E_1(\check{\bm \beta}^k -  {\bm \beta}^k)  \right] \ge 0.
\end{align*}
Using $\check{\bm u}^{k}  = \bm{u}^{k} - \mu(\bm X \check{\bm \beta}^{k} - {\bm r}^{k} )$ in \eqref{proofadmm}, the above equation is 
\begin{align}\label{betaproof}
\lambda \|\bm w \odot \bm \beta \|_1 - \lambda \|\bm w \odot \check{\bm \beta}^k \|_1
+ (\bm \beta -  \check{\bm \beta}^k)^\top (-\bm X^\top \check{\bm u}^k + \bm E_1(\check{\bm \beta}^k -  {\bm \beta}^k)  ) \ge 0. \tag{S1.20}
\end{align}    
Similarly, the optimality condition for the $\check{\bm r}^{k}$-subproblem can be written as
\begin{align}\label{rproof}
L (\bm r) - L (\check{\bm r}^{k}) + (\bm r - \check{\bm r}^{k})^\top\left[ \check{\bm u}^{k} + \mu( \check{\bm r}^{k} - \bm r^k ) +  \bm E_2 ( \check{\bm r}^{k} - \bm r^k )  \right] \ge 0. \tag{S1.21}
\end{align}
The formula for generating  $\check{\bm u}^{k}$ in \eqref{proofadmm}   can also be written as
\begin{align*}
    (\bm X \check{\bm \beta}^{k} - \check{\bm r}^{k} ) + (\check{\bm r}^{k} - {\bm r}^{k}) + \frac{1}{\mu}(\check{\bm u}^{k} - {\bm u}^{k}) = 0.
\end{align*}
Then, we have 
\begin{align}\label{uproof}
( \bm u - \check{\bm u}^{k})^\top  \left[ (\bm X \check{\bm \beta}^{k} - \check{\bm r}^{k} ) + (\check{\bm r}^{k} - {\bm r}^{k}) + \frac{1}{\mu}(\check{\bm u}^{k} - {\bm u}^{k})  \right] \ge 0. \tag{S1.22}
\end{align}
Combining \eqref{betaproof}, \eqref{rproof}, and \eqref{uproof}  together can lead to the conclusion of Lemma \ref{lem3}.\end{proof}

Note that the iteration steps of \eqref{proofadmm} are the same as the iteration variables $\bm \beta$ and $\bm r$ generated by our proposed Algorithm 1, but the iteration of $\bm u$ is different. Therefore, we need the following \textbf{correction} step, 
\begin{align}\label{corr}
  \bm v^{k+1} =  \bm v^{k} - \bm M (\bm v^{k} - \check{\bm v}^{k}), \tag{S1.23}
\end{align}
where $\bm M$ is defined in \eqref{m1}.  The above discussion shows that the proposed Algorithm 1 can be divided into two steps: prediction in \eqref{proofadmm}   and correction in \eqref{corr}. This prediction and correction step has been used as an effective tool to demonstrate the convergence of the ADMM algorithm, as detailed in \cite{He2023ExtensionOA}, \cite{He2015PLC}  and its references.

As shown in Section 4.3 of \cite{He2015PLC}, this prediction-correction step requires the following condition to ensure algorithm convergence, that is, $$\bm H = \bm Q \bm M^{-1} \succ \bm 0 \ \text{and} \ \bm G = \bm Q^\top + \bm Q- \bm M^\top \bm H \bm M \succ \bm 0.   $$
It can be easily inferred from \eqref{m2} and \eqref{m3} that Algorithm 1 satisfies the above convergence conditions. Then, from the results of Theorems 5.2  and 5.6 of \cite{He2015PLC}, we have 
\begin{align}
    \|\bm v^{k+1} - \bm v^{*} \|_{\bm{H}}^2  \le \|\bm v^{k} - \bm v^{*} \|_{\bm{H}}^2 - \|\bm v^{k} - \check{\bm v}^{k} \|_{\bm{G}}^2, \label{result1} \tag{S1.24} \\
    \|\bm M (\bm v^{k+1} - \check{\bm v}^{k+1}) \|_{\bm{H}}^2 \le \|\bm M (\bm v^{k} - \check{\bm v}^{k}) \|_{\bm{H}}^2. \label{result2} \tag{S1.25}
\end{align}

The inequality in \eqref{result1} implies that the sequence \(\{\bm v^k\}\) exhibits Fejér monotonicity. When \(\bm G \succ 0\), this sequence converges to a point \(\bm v^* \in \Omega^*\) in the \(\bm H\)-norm. This key inequality is fundamental for establishing the global convergence of the sequence. The proof of the global convergence can be found in Theorem 3.2 of \cite{He2023ExtensionOA}.  So far, the proof of the first property of Theorem 3 has been completed.

Since  $\bm M (\bm v^{k+1} - \check{\bm v}^{k+1}) = \bm v^{k+1} - \bm v^{k+2}$ and $\bm M (\bm v^{k} - \check{\bm v}^{k}) = \bm v^{k} - \bm v^{k+1}$, the inequality in \eqref{result2} implies that  \begin{align}\label{result22} 
    \| \bm v^{k+1} - \bm v^{k+2} \|_{\bm{H}}^2 \le \| \bm v^{k} - \bm v^{k+1} \|_{\bm{H}}^2. \tag{S1.26}
\end{align}
From the inequality in \eqref{result1}, it follows from $\bm G \succ 0$, there is a constant $c_0 > 0$ such that
\begin{align}\label{result3} 
\|\bm v^{k+1} - \bm v^{*} \|_{\bm{H}}^2 \le \|\bm v^{k} - \bm v^{*} \|_{\bm{H}}^2 - c_0\|\bm M (\bm v^{k} - \check{\bm v}^{k}) \|_{\bm{H}}^2 = \|\bm v^{k} - \bm v^{*} \|_{\bm{H}}^2 - c_0\| \bm v^{k} - \bm v^{k+1} \|_{\bm{H}}^2. \tag{S1.27}
\end{align}

By summing up inequality in \eqref{result3} over \(k = 0, 1, \cdots, K\), we obtain 
\begin{align*}
    c_0 \sum_{k=0}^K \| \bm v^{k} - \bm v^{k+1} \|_{\bm{H}}^2 \le \|\bm v^{0} - \bm v^{*} \|_{\bm{H}}^2.
\end{align*}
Combining \eqref{result22} leads to the conclusion that 
\begin{align}
    c_0 (K+1)  \| \bm v^{K} - \bm v^{K+1} \|_{\bm{H}}^2 \le \|\bm v^{0} - \bm v^{*} \|_{\bm{H}}^2. \tag{S1.28}
\end{align}
Therefore, we have arrived at the second conclusion of Theorem 3.

\section{Proof of  Throrem 4}\label{appD}
Recall that we denote $\left\{ \hat{\bm \beta}^k, \hat{\bm z}^k, \hat{\bm u}^k\right\}$ as the results of the $k$-th iteration of Algorithm 1, and $\left\{ \bar{\bm \beta}^k, \bar{\bm z}^k, \bar{\bm u}^k \right\}$ as the results of the $k$-th iteration of Algorithm 2. We make the assumptions that two algorithms use the same $\eta$ and
\begin{align}\label{ass}
\left\{ \hat{\bm \beta}^k, \hat{\bm z}^k, \hat{\bm u}^k\right\} = \left\{ \bar{\bm \beta}^k, \bar{\bm z}^k, \bar{\bm u}^t \right\}. \tag{S1.29}
\end{align}
Then, we will conduct an analysis of the iteration scenario at step $k + 1$. 

Firstly, we will discuss the update of the $\bm \beta$-subproblem. For Algorithm 1, we have 
\begin{align}\label{dbeta}
\hat{\bm \beta}^{k+1} \leftarrow \arg \min_{\boldsymbol\beta} \left\{ \lambda \| 
 \bm w \odot \bm \beta \|_1 + \frac{\eta}{2} \| \bm \beta - \hat{\bm \beta}^k + \frac{\mu \bm X^\top(\bm X \hat{\bm \beta}^k - \hat{\bm r}^k  - \hat{\bm u}^k/\mu) }{\eta}   \|_{2}^2 \right\}. \tag{S1.30}
\end{align}    
and for Algorithm 2, we have
\begin{equation}
\tag{S1.31}
\begin{aligned}\label{dbeta2}
\bar{\bm{\beta}}^{k + 1} \leftarrow \underset{\boldsymbol{\beta}}{\mathrm{argmin}} \left\{ \lambda \left\lVert \bm{w} \odot \bm{\beta} \right\rVert_1 + \frac{\eta}{2} \left\lVert \bm{\beta} - \bar{\bm{\beta}}^k + \sum_{g = 1}^{G} \bm{\xi}_g^k \right\rVert_2^2 \right\}, 
\end{aligned}    
\end{equation}
where $\bm{\xi}_g^k = \frac{\mu \bm{X}_g^\top(\bm{X}_g \bar{\bm{\beta}}^k - \bar{\bm{r}}_g^k - \bar{\bm{u}}_g^k / \mu)}{\eta}$.
Since $\bm X = [\bm X_1^\top, \bm X_2^\top,\dots,\bm X_G^\top ]^\top$, $\bar{\bm{r}}^k = [(\bar{\bm{r}}_1^k)^\top,(\bar{\bm{r}}_2^k)^\top,\dots,(\bar{\bm{r}}_G^k)^\top]^\top$ and $\bar{\bm{u}}^k = [(\bar{\bm{u}}_1^k)^\top,(\bar{\bm{u}}_2^k)^\top,\dots,(\bar{\bm{u}}_G^k)^\top]^\top$,  it can be concluded that 
\begin{align}\label{prc3}
\sum_{g = 1}^{G} \bm{\xi}_g^k =  \frac{\mu \bm X^\top(\bm X \bar{\bm \beta}^k - \bar{\bm r}^k  - \bar{\bm u}^k/\mu) }{\eta}. \tag{S1.32}
\end{align}
According to the assumption in \eqref{ass}, there is  $\hat{\bm \beta}^k =  \bar{\bm \beta}^k $, $\hat{\bm r}^k = \bar{\bm r}^k$ and  $\hat{\bm u}^k = \bar{\bm u}^k$. It follows from \eqref{dbeta} and  \eqref{dbeta2} that
\begin{align}\label{prbeta}
 \hat{\bm \beta}^{k+1} = \bar{\bm \beta}^{k+1}. \tag{S1.33}
\end{align}

Next, we will discuss the update of the $\bm r$-subproblem. For Algorithm 1, we have  
\begin{equation}
\tag{S1.34}
\begin{aligned}\label{dr}
  \hat{\bm r}^{k+1} \leftarrow \left[ \Phi( \hat{\bm r}^k) + \mu \bm I_p   \right]^{-1}  \left[ \mu  ( \bm X  \hat{\bm \beta}^{k+1} -  \hat{\bm u}^k/\mu) + \Phi( \hat{\bm r}^k)  \hat{\bm r}^k  -  \phi^{'}( \hat{\bm r}^k) + \bm S \right].
\end{aligned}
\end{equation}
$\hat{\bm r}^{k + 1}$ can be expressed as $[(\hat{\bm r}_1^{k + 1})^\top, (\hat{\bm r}_2^{k + 1})^\top, \ldots, (\hat{\bm r}_G^{k + 1})^\top]$, and $\bm S$, $\hat{\bm u}^k$ and $\hat{\bm r}^k$ are divided in a similar manner. Drawing on the definition of $\Phi( \hat{\bm r}^k)$, it becomes clear that the diagonal elements of its block-diagonal matrix are $\Phi( \hat{\bm r}_g^k)$, with $g$ ranging from $1$ to $G$.  Then, we have  
\begin{equation}
\tag{S1.35}
\begin{aligned}\label{dr2}
  \hat{\bm r}_g^{k+1} \leftarrow \left[ \Phi( \hat{\bm r}_g^k) + \mu \bm I_{n_g}   \right]^{-1}  \left[ \mu  ( \bm X  \hat{\bm \beta}_g^{k+1} -  \hat{\bm u}_g^k/\mu  ) 
  + \Phi( \hat{\bm r}_g^k)  \hat{\bm r}_g^k  -  \phi^{'}( \hat{\bm r}_g^k) + \bm S_g \right], g=1,\dots,G.
\end{aligned}
\end{equation}
For Algorithm 2, we get
\begin{align}\label{dr3} 
  \bar{\bm r}_g^{k+1} \leftarrow \left[ \Phi(\bar{\bm r}_g^k) + \mu \bm I_{n_g}   \right]^{-1}  \left[ \mu  ( \bm X_g \bar{\bm \beta}^{k+1} - \bar{\bm u}_g^k/\mu  ) + \Phi(\bar{\bm r}_g^k) \bar{\bm r}_g^k  -  \phi^{'}(\bar{\bm r}_g^k) + \bm S_g \right], g=1,\dots,G. \tag{S1.36}
\end{align}
Again,  according to the assumption in \eqref{ass} and \eqref{prbeta},   there is  $\hat{\bm r}_g^k = \bar{\bm r}_g^k$,  $\hat{\bm u}_g^k = \bar{\bm u}_g^k$ and $\hat{\bm \beta}^{k+1} = \bar{\bm \beta}^{k+1}$. 
It follows from   \eqref{dr2} and  \eqref{dr3} that
\begin{align}\label{prz3}
 \hat{\bm r}_g^{k+1} = \bar{\bm r}_g^{k+1}, g =1,\dots,G. \tag{S1.37}
\end{align}
Thus, we obtain 
\begin{align}\label{dr4}
    \hat{\bm r}^{k+1} = \bar{\bm r}^{k+1}. \tag{S1.38}
\end{align}
Finally, we will discuss the update of the $\bm u$-subproblem.
For Algorithm 1,
$$\hat{\bm u}^{k + 1} = \hat{\bm{u}}^{k} - \mu(\bm X \hat{\bm \beta}^{k + 1} - \hat{\bm r}^{k + 1} ).$$
Hence,
\begin{align}
  \hat{\bm u}_g^{k + 1} = \hat{\bm{u}}_g^{k} - \mu(\bm X_g \hat{\bm \beta}^{k + 1} - \hat{\bm r}_g^{k + 1} ).  \tag{S1.39}
\end{align} 
For Algorithm 2,
$$\bar{\bm u}_g^{k + 1} = \bar{\bm{u}}_g^{k} - \mu(\bm X_g \bar{\bm \beta}^{k + 1} - \bar{\bm r}_g^{k + 1} ).$$
It follows from \eqref{ass}, \eqref{prbeta} and \eqref{prz3} that 
\begin{align}\label{pru0}
 \hat{\bm u}_g^{t+1} = \bar{\bm u}_g^{t+1}, g =1,\dots,G. \tag{S1.40}
\end{align}
Thus, we obtain 
\begin{align}\label{pru}
    \hat{\bm u}^{k+1} = \bar{\bm u}^{k+1}. \tag{S1.41}
\end{align}
Therefore, we can conclude that if the iteration sequences of the two algorithms (with the same $\eta$) at step $k$ are identical, those at step $k + 1$ will be too.  Based on this, assuming that \(\{ \hat{\bm \beta}^0, \hat{\bm r}^0, \hat{\bm u}^0 \} = \{ \bar{\bm \beta}^0, \bar{\bm r}^0 , \bar{\bm u}^0 \}\), it follows that \(\{ \hat{\bm \beta}^1, \hat{\bm r}^1, \hat{\bm u}^1 \} = \{ \bar{\bm \beta}^1, \bar{\bm r}^1 , \bar{\bm u}^1 \}\). By the same token, we can deduce that 
\begin{align}
\left\{ \hat{\bm \beta}^k, \hat{\bm r}^k, \hat{\bm u}^k \right\} = \left\{ \bar{\bm \beta}^k, \bar{\bm z}^k , \bar{\bm u}^k \right\}, \quad \text{for all } k. \tag{S1.42}
\end{align}
Up to this point, the proof of the theorem has been successfully completed.

\section{Other Real Dataset Experiments}\label{appF}
Here, we add three additional datasets from Section 5.2 regarding numerical experiments on ARE. Since the data scales of the three datasets vary, we partitioned them into training sets and prediction sets respectively as follows: for ``ijcmn1", it's (45,000 + 4,990); for ``a9a", (30,000 + 2,561); and for ``phishing", (10,000 + 1,055). Other settings related to data experiments are the same as in Section 5.2. The subsequent Figure \ref{fig3} is going to illustrate the relationship among the ARE, \(m\), and \(\alpha_0\). In the first sub-figure, with \(\alpha_0\) set at a constant value of 1, the theoretical formula for the ARE is \(2-\frac{2}{m + 1}\). Meanwhile, in the second sub-figure, when \(m\) is held constant at 2, the corresponding theoretical formula for the ARE is \(2 - \frac{2}{\alpha_0+2}\).

The numerical outcomes depicted in the figure reveal that when it comes to the fitting of the theoretical ARE, Dataset “ijcmn1” exhibits the optimal performance. It is succeeded by Dataset “a9a”, and then by Dataset “phishing”. The disparities among these three datasets could potentially be ascribed to the ratio of the sample size to the magnitude of the parameters awaiting estimation. Specifically, Dataset “ijcmn1” boasts the highest such ratio, whereas Dataset “phishing” has the lowest.

\begin{figure}[H]
    \centering
    \includegraphics[width=0.9\linewidth]{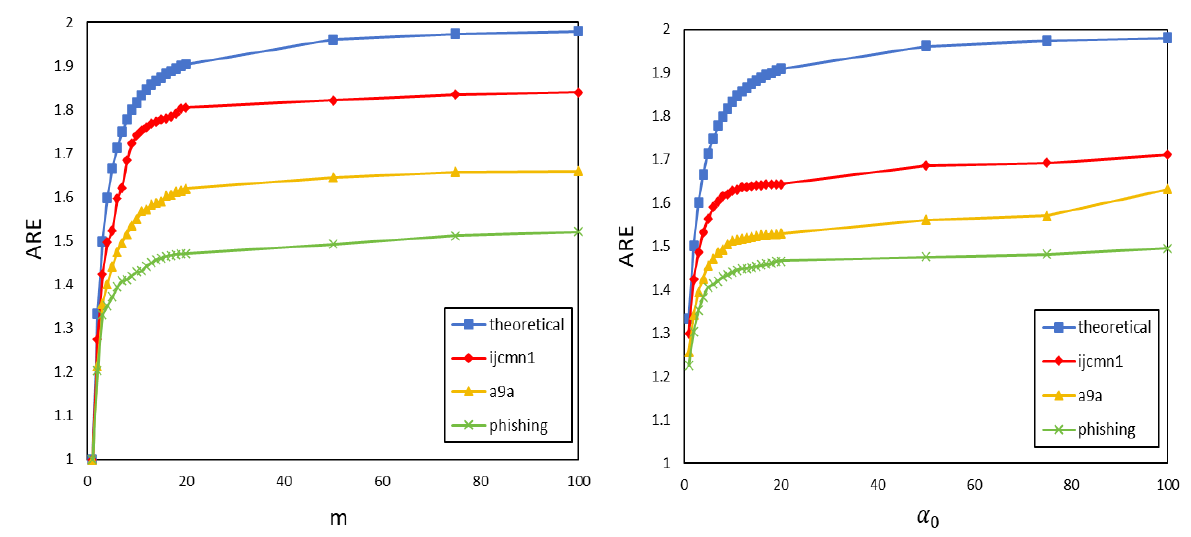}
    \caption{\footnotesize{A schematic diagram showing how the ARE value changes with the variation of the values of \(m\) and \(\alpha_0\)}.}
    \label{fig3}
\end{figure}

\begin{small}
\bibliographystyle{elsarticle-num}
\bibliography{myrefq(abb)}  

\begin{thebibliography}{10}
\expandafter\ifx\csname url\endcsname\relax
  \def\url#1{\texttt{#1}}\fi
\expandafter\ifx\csname urlprefix\endcsname\relax\def\urlprefix{URL }\fi
\expandafter\ifx\csname href\endcsname\relax
  \def\href#1#2{#2} \def\path#1{#1}\fi

\bibitem{Yao2021AsymmetricEC}
S.~Yao, B.~Rava, X.~Tong, G.~M. James, Asymmetric error control under imperfect
  supervision: A label-noise-adjusted neyman–pearson umbrella algorithm, J.
  Am. Stat. Assoc. 118 (2021) 1824 -- 1836.

\bibitem{Bai2024ACS}
Y.~Bai, H.~Cui, A class sensitivity feature guided t-type generative model for
  noisy label classification, Mach. Learn. 113 (2024) 7867--7904.

\bibitem{Frnay2014ClassificationIT}
B.~Fr{\'e}nay, M.~Verleysen, Classification in the presence of label noise: A
  survey, IEEE Trans. Neural Networks Learn. Syst. 25 (2014) 845--869.

\bibitem{Li2024PositiveN}
X.~Li, Positive-incentive noise, IEEE Trans. Neural Networks Learn. Syst.
  35~(6) (2024) 8708--8714.

\bibitem{Cannings2018ClassificationWI}
T.~I. Cannings, Y.~Fan, R.~J. Samworth, Classification with imperfect training
  labels, Biometrika 107 (2018) 311–330.

\bibitem{Ahfock2021HarmlessLN}
D.~Ahfock, G.~J. McLachlan, Harmless label noise and informative soft-labels in
  supervised classification, Comput. Stat. Data Anal. 161 (2021) 107253.

\bibitem{Lee2021BinaryCW}
Y.~Lee, R.~F. Barber, Binary classification with corrupted labels, Electron. J.
  Stat. (2021) 1367--1392.

\bibitem{Zou2006TheAL}
H.~Zou, {The Adaptive Lasso and Its Oracle Properties}, J. Am. Stat. Assoc. 101
  (2006) 1418--1429.

\bibitem{Van2020StackedPL}
W.~{Van Loon}, M.~Fokkema, B.~Szabo, M.~{de Rooij}, Stacked penalized logistic
  regression for selecting views in multi-view learning, Inf. Fusion 61 (2020)
  113--123.

\bibitem{Hastie2015StatisticalLW}
T.~J. Hastie, R.~Tibshirani, M.~J. Wainwright, Statistical Learning with
  Sparsity: The Lasso and Generalizations, 1st Edition, Chapman and Hall/CRC,
  New York, 2015.

\bibitem{Fraley2002ModelBasedCD}
C.~Fraley, A.~E. Raftery, Model-based clustering, discriminant analysis, and
  density estimation, J. Am. Stat. Assoc. 97 (2002) 611 -- 631.

\bibitem{McLachlan2000FiniteMM}
G.~J. McLachlan, D.~Peel, Finite Mixture Models, John Wiley \& Sons Inc, New
  York, 2000.

\bibitem{Fan2001VariableSV}
J.~Fan, R.~Li, {Variable Selection via Nonconcave Penalized Likelihood and its
  Oracle Properties}, J. Am. Stat. Assoc. 96~(456) (2001) 1348--1360.

\bibitem{Fan2004NonconcavePL}
J.~Fan, H.~Peng, Nonconcave penalized likelihood with a diverging number of
  parameters, Ann. Stat. 32 (2004) 928--961.

\bibitem{Smyth1994InferringGT}
P.~Smyth, U.~M. Fayyad, M.~C. Burl, P.~Perona, P.~Baldi, Inferring ground truth
  from subjective labelling of venus images, in: Neural Information Processing
  Systems, 1994, p. 1085–1092.

\bibitem{Bouveyron2009RobustSC}
C.~Bouveyron, S.~Girard, Robust supervised classification with mixture models:
  Learning from data with uncertain labels, Pattern Recognit. 42 (2009)
  2649--2658.

\bibitem{Yan2010ModelingAE}
Y.~Yan, R.~Rosales, G.~Fung, M.~W. Schmidt, G.~Hermosillo, L.~Bogoni, L.~Moy,
  J.~G. Dy, Modeling annotator expertise: Learning when everybody knows a bit
  of something, in: International Conference on Artificial Intelligence and
  Statistics, 2010, p. 932–939.

\bibitem{Raykar2012EliminatingSA}
V.~C. Raykar, S.~Yu, Eliminating spammers and ranking annotators for
  crowdsourced labeling tasks, J. Mach. Learn. Res.~(1) (2012) 491 -- 518.

\bibitem{Zhang2013LearningBA}
P.~Zhang, W.~Cao, Z.~Obradovic, Learning by aggregating experts and filtering
  novices: a solution to crowdsourcing problems in bioinformatics, BMC Bioinf.
  14 (2013) S5 -- S5.

\bibitem{Jin2002LearningWM}
R.~Jin, Z.~Ghahramani, Learning with multiple labels, in: Neural Information
  Processing Systems, 2002, pp. 921--928.

\bibitem{Raykar2009SupervisedLF}
V.~C. Raykar, S.~Yu, L.~H. Zhao, A.~K. Jerebko, C.~Florin, G.~Hermosillo,
  L.~Bogoni, L.~Moy, Supervised learning from multiple experts: whom to trust
  when everyone lies a bit, in: International Conference on Machine Learning,
  2009, p. 889–896.

\bibitem{Bi2010TheEO}
Y.~Bi, D.~R. Jeske, The efficiency of logistic regression compared to normal
  discriminant analysis under class-conditional classification noise, J.
  Multivar. Anal. 101 (2010) 1622--1637.

\bibitem{Song2019ConvexAN}
H.~Song, R.~Dai, G.~Raskutti, R.~F. Barber, Convex and non-convex approaches
  for statistical inference with noisy labels, J. Mach. Learn. Res. (2019)
  1--58.

\bibitem{Blanchard2016ClassificationWA}
G.~Blanchard, M.~Flaska, G.~Handy, S.~Pozzi, C.~Scott, Classification with
  asymmetric label noise: Consistency and maximal denoising, Electron. J. Stat.
  10~(2) (2016) 2780--2824.

\bibitem{Johnson1997DiscreteMD}
N.~L. Johnson, S.~Kotz, N.~Balakrishnan, Discrete Multivariate Distributions,
  Wiley, New York, 1997.

\bibitem{Xu2025IncompleteLD}
S.~Xu, L.~Shang, F.~Shen, X.~Yang, W.~Pedrycz, Incomplete label distribution
  learning via label correlation decomposition, Inf. Fusion 113 (2025) 102600.

\bibitem{Boyd2010DistributedOA}
S.~Boyd, N.~Parikh, E.~Chu, B.~Peleato, J.~Eckstein, {Distributed Optimization
  and Statistical Learning Via the Alternating Direction Method of
  Multipliers}, Found. Trends Mach. Learn. 3~(1) (2010) 1--122.

\bibitem{Jiao2026MarginalDR}
Y.~Jiao, D.~Niu, Y.~Hong, Marginal distributionally robust fusion of
  probability density functions, Inf. Fusion 125 (2026) 103423.

\bibitem{Yuan2020DiscerningTL}
X.~Yuan, S.~Zeng, J.~Zhang, Discerning the linear convergence of admm for
  structured convex optimization through the lens of variational analysis, J.
  Mach. Learn. Res. 21 (2020) 83:1--83:75.

\bibitem{Liang2024LinearizedAD}
R.~Liang, X.~Wu, Z.~Zhang, {Linearized Alternating Direction Method of
  Multipliers for Elastic-net Support Vector Machines}, Pattern Recognit. 148
  (2024) 110134.

\bibitem{Lin2022AlternatingDM}
Z.~Lin, C.~Fang, H.~Li, {Alternating Direction Method of Multipliers for
  Machine Learning}, 1st Edition, Springer, Singapore, 2022.

\bibitem{Li2019AcceleratedAD}
H.~Li, Z.~Lin, Accelerated alternating direction method of multipliers: An
  optimal $o(1/k)$ nonergodic analysis, J. Sci. Comput. 79 (2019) 671 -- 699.

\bibitem{He2015OnNC}
B.~He, X.~Yuan, {On non-ergodic convergence rate of Douglas–Rachford
  alternating direction method of multipliers}, Numer. Math. 130~(3) (2015)
  567--577.

\bibitem{Wu2023PartitionPA}
X.~Wu, J.~Jiang, Z.~Zhang, {Partition-Insensitive Parallel ADMM Algorithm for
  High-dimensional Linear Models}, arXiv (unpublished results).

\bibitem{Ding2024AGA}
W.~Ding, M.~K. Ng, W.~Zhang, {A generalized alternating direction implicit
  method for consensus optimization: application to distributed sparse logistic
  regression}, J. Global Optim. 90 (2024) 727–753.

\bibitem{Wu2025AUC}
X.~Wu, R.~Liang, Z.~Zhang, Z.~Cui, {A unified consensus-based parallel ADMM
  algorithm for high-dimensional regression with combined regularizations},
  Comput. Stat. Data Anal. 203 (2025) 108081.

\bibitem{Friedman2010RegularizationPF}
J.~H. Friedman, T.~J. Hastie, R.~Tibshirani, Regularization paths for
  generalized linear models via coordinate descent, J. Stat. Softw. 33 1 (2010)
  1--22.

\bibitem{Chen2025MultiSV}
C.~Chen, Q.~Liu, R.~Xu, Y.~Zhang, H.~Wang, Q.~Yu, Multi-view support vector
  machine classifier via l-0/1 soft-margin loss with structural information,
  Inf. Fusion 115 (2025) 102733.

\bibitem{He2023ExtensionOA}
B.~He, S.~Xu, X.~Yuan, Extensions of admm for separable convex optimization
  problems with linear equality or inequality constraints, in: Numerical
  Control: Part B, Elsevier, North Holland, 2023, pp. 511--557.

\bibitem{He2012OnTO}
B.~He, X.~Yuan, {On the $O(1/n)$ Convergence Rate of the Douglas–Rachford
  Alternating Direction Method}, SIAM J. Numer. Anal. 50~(2) (2012) 700--709.

\bibitem{He2015PLC}
B.~He, {PPA-Like Contraction Methods for Convex Optimization: A Framework Using
  Variational Inequality Approach}, Journal of the Operations Research Society
  of China 3 (2015) 391–420.

\end{thebibliography}
\end{small}

\end{document}